\def\ie{{\frenchspacing\it i.e.}}
\def\eg{{\frenchspacing\it e.g.}}
\def\etc{{\frenchspacing\it etc.}}
\newcommand{\mat}[1]{\mathbf{#1}}
\newcommand{\x}{\mat{x}}
\newcommand{\q}{\mat{q}}
\newcommand{\qd}{\dot{\mat{q}}}
\newcommand{\qdd}{\ddot{\mat{q}}}
\newcommand{\lag}{\mathcal{L}}
\newcommand{\rvec}{\boldsymbol{r}}
\newcommand{\fvec}{\boldsymbol{f}}
\newcommand{\vvec}{\boldsymbol{v}}
\newcommand{\w}{\boldsymbol{w}}
\newtheorem*{rep@theorem}{\rep@title}
\newcommand{\newreptheorem}[2]{%
\newenvironment{rep#1}[1]{%
 \def\rep@title{#2 \ref{##1}}%
 \begin{rep@theorem}}%
 {\end{rep@theorem}}}
\newtheorem{theorem}{Theorem}
\newtheorem{lemma}{Lemma}
\newtheorem{corollary}{Corollary}[theorem]
\newcommand{\tabincell}[2]{\begin{tabular}{@{}#1@{}}#2\end{tabular}}
\newtheorem{definition}{Definition}[section]
\def\spose#1{\hbox to 0pt{#1\hss}}
\def\simlt{\mathrel{\spose{\lower 3pt\hbox{$\mathchar"218$}}
     \raise 2.0pt\hbox{$\mathchar"13C$}}}
\def\simgt{\mathrel{\spose{\lower 3pt\hbox{$\mathchar"218$}}
     \raise 2.0pt\hbox{$\mathchar"13E$}}}
\def\simpropto{\mathrel{\spose{\lower 3pt\hbox{$\mathchar"218$}}
     \raise 2.0pt\hbox{$\propto$}}}
\def\beq#1{\begin{equation}\label{#1}}
\def\eeq{\end{equation}}
\def\beqa#1{\begin{eqnarray}\label{#1}}
\def\eeqa{\end{eqnarray}}
\def\eq#1{equation~(\ref{#1})}
\begin{document}

\preprint{APS/123-QED}

\title{Machine-Learning Non-Conservative Dynamics for New-Physics Detection}% Force line breaks with \\

\author{Ziming Liu}
\email{zmliu@mit.edu}
\affiliation{Microsoft Research Asia, Beijing, China}
\affiliation{Department of Physics, Massachusetts Institute of Technology, Cambridge, USA}
\affiliation{AI Institute for Artificial Intelligence and Fundamental Interactions (IAIFI)}
\author{Bohan Wang}
\affiliation{Microsoft Research Asia, Beijing, China}
\author{Qi Meng}
\affiliation{Microsoft Research Asia, Beijing, China}
\author{Wei Chen}
\email{wche@microsoft.com}
\affiliation{Microsoft Research Asia, Beijing, China}
\author{Max Tegmark}
\email{tegmark@mit.edu}
\affiliation{Department of Physics, Massachusetts Institute of Technology, Cambridge, USA}
\affiliation{AI Institute for Artificial Intelligence and Fundamental Interactions (IAIFI)}
\author{Tie-Yan Liu}
\email{tyliu@microsoft.com}
\affiliation{Microsoft Research Asia, Beijing, China}

\date{\today}% It is always \today, today,

\begin{abstract}
	Energy conservation is a basic physics principle, the breakdown of which often implies \textit{new physics}. This paper presents a method for data-driven ``new physics" discovery. Specifically, given a trajectory governed by unknown forces, our \textit{Neural New-Physics Detector (NNPhD)} aims to detect new physics by decomposing the force field into conservative and non-conservative components, which are represented by a Lagrangian Neural Network (LNN) and a universal approximator network (UAN), respectively, trained to minimize the force recovery error plus a constant $\lambda$ times the magnitude of the predicted non-conservative force. We show that a phase transition occurs at $\lambda=1$, universally for arbitrary forces. We demonstrate that NNPhD successfully discovers new physics in toy numerical experiments, rediscovering friction (1493) from damped double pendulum, Neptune from Uranus' orbit (1846) and gravitational waves (2017) from an inspiraling orbit. We also show how NNPhD coupled with an integrator outperforms previous methods for predicting the future of a damped double pendulum.
\end{abstract}

\maketitle

\section{Introduction}

Energy conservation is a fundamental physical law, so when non-conservation is observed, physicists often consider it evidence of an unseen body or novel external forces rather than questioning the conservation law itself. In this paper, we will therefore refer to energy non-conservation as simply 
\textit{new physics} and strive to auto-detect it\footnote{In contrast, ``new physics" in high energy physics specifically refers to ``new fundamental particles" or ``new fundamental interactions" beyond the Standard Model~\cite{burgess2007standard}.}.
Many experimental new physics discoveries have manifested as apparent violation of energy conservation, for example friction \cite{HUTCHINGS201651}, Neptune \cite{Leverrier}, neutrinos \cite{Cowan1956},
dark matter \cite{Zwicky,Rubin},
extra-solar planets \cite{wolszczan1992planetary} and gravitational waves \cite{HulseTaylor}. We focus on classical mechanics in this paper, but the idea extends to all fields of physics including quantum mechanics. We illustrate several classic examples in FIG.~\ref{fig:nnphd_a}. 
In these cases, the new physics was historically identified from the residual force after fitting data to a conservative force of a {\it known functional form}. 
The key novel contribution in this paper is that our proposed model, dubbed the Neural New Physics Detector (NNPhD), can discover the new physics even when the form of the conservative ``old physics" is {\it not known}.

Data-driven discovery has proven extremely useful in physics, yet also non-trivial. For example, Kepler spent 25 years analyzing astronomical data before formulating his eponymous three laws. In this paper, we aim to automate and accelerate data-driven new physics discovery using machine learning tools. More concretely, given the trajectory of one or several objects governed by some force, we aim to decompose the force into conservative and non-conservative parts, followed by a symbolic regression module for explanation. As a trivial example, we aim to decompose the force $f=-kq-\gamma\dot{q}$ of a damped harmonic oscillator into conservative part $f_{\mathrm{c}}=-kq$ and a non-conservative part $f_{\mathrm{n}}=-\gamma\dot{q}$. 

\onecolumngrid

\begin{figure*}[h]
	\centering
	\includegraphics[width=0.8\linewidth,trim=0cm 1cm 0cm 0cm]{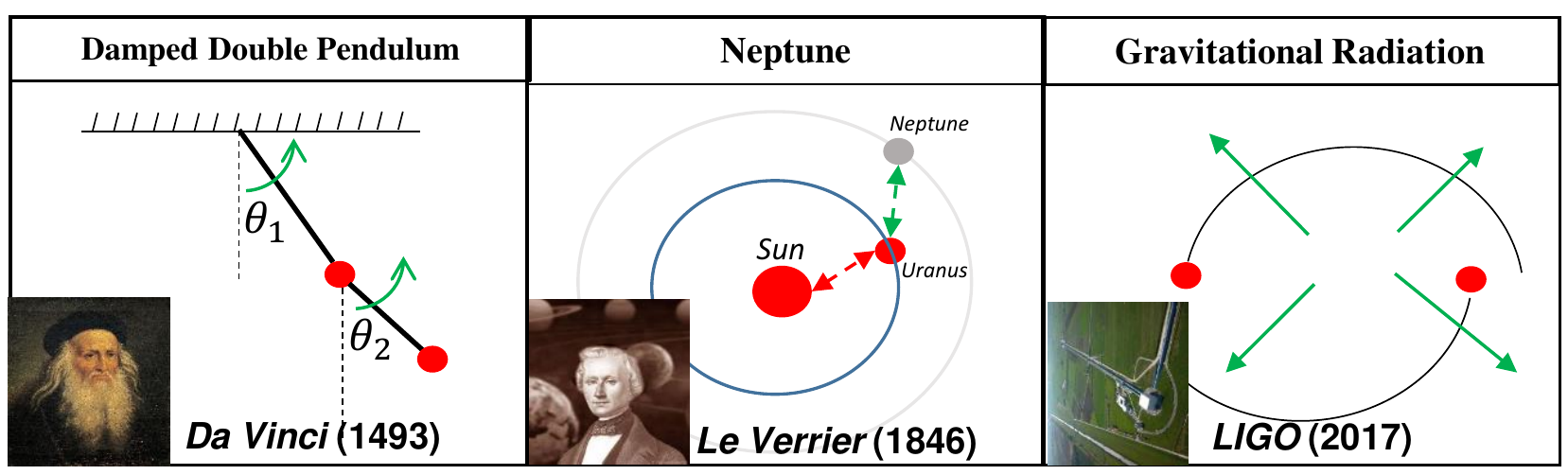}
	\vskip 0.8cm
	\caption{NNPhD can auto-rediscover several classic examples.}
	\label{fig:nnphd_a}
	\vskip -0.5cm
\end{figure*}

\twocolumngrid

\begin{figure*}[htbp]
	\centering
	\includegraphics[width=0.9\linewidth,trim=0cm 1cm 0cm 0cm]{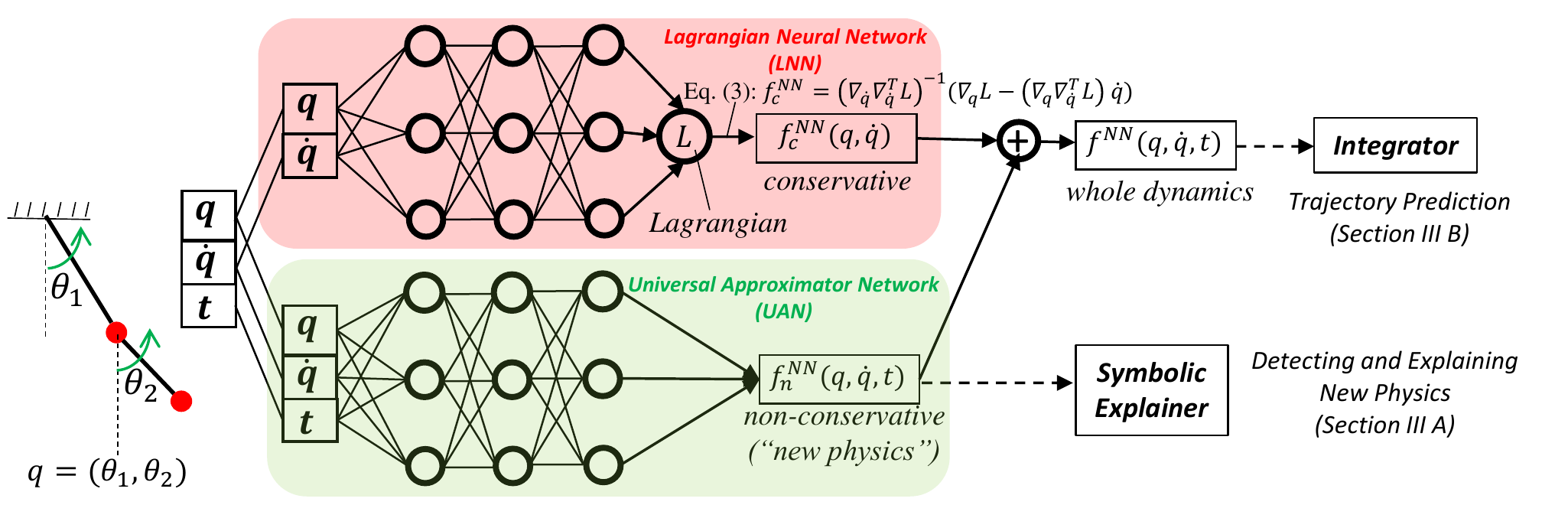}
	\vskip 0.5cm
	\caption{NNPhD predicts dynamics by decomposing the force into conservative and non-conservative components, which can reveal new physics and improve trajectory extrapolation.}
	\label{fig:nnphd_b}
	\vskip -0.3cm
\end{figure*}

Conservation laws have been introduced into neural networks as strong inductive biases, such as in the Lagrangian Neural Network (LNN)~\cite{cranmer2020lagrangian}, the Hamiltonian Neural Network (HNN)~\cite{greydanus2019hamiltonian} and variants~\cite{finzi2020simplifying, choudhary2020forecasting,jin2020sympnets, HGN}. The limitation of these models lies in their inability to model non-conservative dynamics, where the non-conservation can be caused by dissipation, external driving forces, etc. Our proposed NNPhD can resolve this limitation by augmenting LNN with a universal approximation network (UAN), illustrated in FIG.~\ref{fig:nnphd_b}. Although prior works~\cite{long2018pde,chen2018neural,battaglia2016interaction,NEURIPS2019_b2945042,PhysRevX.10.031056,champion2019data,udrescu2020symbolic,kim2019integration,raissi2019physics,cranmer2020lagrangian,greydanus2019hamiltonian,matsubara2019deep,cranmer2020discovering,Raissi1026,PhysRevE.100.033311,PhysRevX.10.021020,lutter2019deep, cranmer2020lagrangian, finzi2020simplifying, greydanus2019hamiltonian, choudhary2020forecasting,jin2020sympnets, HGN, poincare,welch1995introduction, guen2020augmenting,ajay2018augmenting,wu2015galileo,zhong2020dissipative} attempt to learn the general or conservative force from data, most of these methods are unable to perform force decomposition, except for ~\cite{ guen2020augmenting,zhong2020dissipative} which assume (partial) knowledge of physics thus lose generality. Moreover, while the current literature mostly focuses on model \textit{predictability}, we pay extra attention to \textit{explainability} made possible by symbolic regression.

The rest of this paper is organized as follows: In Section \ref{MethodSec}, we review the problem framing and useful results, define {\it force decomposition with minimal non-conservation} and propose NNPhD to learn this force decomposition. In Section \ref{sec:experiments}, we carry out numerical experiments to verify our theoretical analysis of the presented algorithm, as well as to demonstrate the potential of NNPhD for new-physics discovery.

\section{Method}
\label{MethodSec}

\subsection{Notation}

We consider the general classical physical system described by an $n$-dimensional vector $\mat{q}$ of generalized coordinates whose time-evolution $\mat{q}(t)$ is governed by 
a second-order ordinary differential equation
\begin{equation}\label{eq:qdd=f}
	\ddot{\mat{q}}=f(\mat{q},\dot{\mat{q}},t),
\end{equation}
where $f:\mathbb{R}^{2n+1}\to\mathbb{R}^n$.
The \textit{acceleration} $\qdd$ is intimately related to \textit{force} according to Newton's second law. In the following, we for simplicity refer to $f(\mat{q},\dot{\mat{q}},t)$ as a \textit{force field} (dynamics perspective) or \textit{acceleration field} (kinematics perspective) interchangeably.
The dynamical systems in our numerical examples consist of $k$ particles in $d$ dimensions, so $n=kd$ and $\mat{q}\equiv[\mat{q}_1,\cdots,\mat{q}_k]\in\mathbb{R}^{n}$, but our NNPhD method is fully general and makes no such assumptions.

An important subset of dynamical systems are known as {\it conservative} because they conserve energy, which can be described by Euler-Lagrange Equation:
\beq{EulerLagrangeEq}
\frac{d}{dt}\nabla_{\qd} \lag=\nabla_{\q}\lag,
\eeq
where $\lag(\q,\qd)$ is the Lagrangian and $\nabla$ is the gradient operator. As reviewed in Appendix A and~\cite{cranmer2020lagrangian}, the Lagrangian mechanics formalism implies that such systems allows \eq{eq:qdd=f}
to be re-expressed as 
\begin{equation}\label{eq:AL}
	\qdd = (\nabla_{\qd} \nabla_{\qd}^T\lag)^{-1}\left[\nabla_{\q}\lag-(\nabla_{\q}\nabla_{\qd}^T\lag)\qd\right]
\end{equation}
For readers whose background is primarily in machine learning rather than physics, Appendix \ref{app:ml_lag} provides a brief review of the Lagrangian mechanics formalism that we use in this paper.

\subsection{Lagrangian neural networks}

To guarantee energy conservation, inductive biases have recently been embedded into neural networks, including Lagrangian Neural Network~\cite{cranmer2020lagrangian}, Hamiltonian Neural Network~\cite{greydanus2019hamiltonian} and variants~\cite{finzi2020simplifying, choudhary2020forecasting,jin2020sympnets, HGN}.
As shown in FIG.~\ref{fig:nnphd_b}, a LNN uses a neural network to parametrize the Lagrangian $\lag(\q,\qd)$ and output $f_{\mathrm{c}}^{NN}(\q,\qd)$ through evaluating Eq.~(\ref{eq:AL}). For a given loss function defined between model output $f_{\mathrm{c}}^{NN}(\q,\qd)$ and ground truth $\qdd$, the LNN parameters can be learned  using standard  optimization algorithms. A trained LNN therefore contains a Lagrangian that determines conservative dynamics. Since not all physical systems conserve energy, the  Lagrangian mechanics is insufficient for describing non-conservative dynamics, motivating the NNPhD framework.

\subsection{The force decomposition minimizing non-conservation}\label{sec:problem_formulation}

Following the problem setting of LNN, we focus on the simple setting where the acceleration $\qdd$ is a known function i.e.,  $\qdd\equiv f(\q,\qd,t)$. Our goal is therefore not to {\it learn} the force field, but to {\it decompose} the force field. In practice, where only discrete points on trajectory $\{(\q^{(i)},t^{(i)})\}$ are known, $\qd^{(i)}$ and $\qdd^{(i)}$ can be extracted using a Neural ODE module~\cite{chen2018neural}.

The main goal is to decompose the force field $f(\q,\qd,t):\mathbb{R}^{2n+1}\to\mathbb{R}^{n}$ into a (time-independent) conservative component $f_{\mathrm{c}}(\q,\qd):\mathbb{R}^{2n}\to\mathbb{R}^{n}$ and a non-conservative component $f_{\mathrm{n}}(\q,\qd,t):\mathbb{R}^{2n+1}\to\mathbb{R}^{n}$ such that 
\beq{DecompositionEq}
f(\q,\qd,t)=f_{\mathrm{c}}(\q,\qd)+f_{\mathrm{n}}(\q,\qd,t).
\eeq
In general, the decomposition is not unique. We desire the decomposition that minimizes the non-conservative component $f_{\mathrm{n}}(\q,\qd,t)$. To define the distance between two functions, we embed all functions $f(\q,\qd,t)$ in a normed vector space $(\mathcal{F},\|\cdot\|)$ and define its conservative subspace $\mathcal{F}_{\mathrm{c}}\subset\mathcal{F}$ as
\begin{align*}
	&\mathcal{F}_{\mathrm{c}}=\left\{f\in\mathcal{F}\big|^{\ \exists \mathcal{L}(\q,\qd):\mathbb{R}^{2n}\rightarrow\mathbb{R}, \quad\textit{s.t.}}_{\ f(\q,\qd)=(\nabla_{\qd} \nabla_{\qd}^T\lag)^{-1}\left(\nabla_{\q}\lag-(\nabla_{\q}\nabla_{\qd}^T\lag)\qd\right)}\right\}.
\end{align*}
We formally define the force decomposition as follows:
\begin{definition} \textbf{(force decomposition with minimal non-conservation)} 
	{\normalfont The} conservative component {\normalfont of $f(\q,\qd,t)$ is defined as}
	\begin{align}\label{eq:non-conservation-minimization}
		f_{\mathrm{c}}(\q,\qd)\equiv\arg\min_{g\in\mathcal{F}_{\mathrm{c}}}\|f(\q,\qd,t)-g(\q,\qd)\|.
	\end{align} {\normalfont We denote $f_{\mathrm{n}}(\q,\qd,t)\equiv f(\q,\qd,t)-f_{\mathrm{c}}(\q,\qd)$ the} non-conservative component {\normalfont of $f$ and denote the decomposition $f(\q,\qd)=f_{\mathrm{c}}(\q,\qd)+f_{\mathrm{n}}(\q,\qd,t)$ the \textit{force decomposition minimizing non-conservation}.}
\end{definition}

\subsection{Neural New-Physics Detector (NNPhD) framework}\label{sec:nnphd}

To learn the force decomposition minimizing non-conservation, we define a learning framework dubbed the \textit{Neural New-Physics Detector} (NNPhD).
Specifically, NNPhD learns $f_{\mathrm{c}}$ and $f_{\mathrm{n}}$ jointly. 
As illustrated in FIG.~\ref{fig:nnphd_b}, NNPhD consists of two parallel modules, a Lagrangian Neural Network (LNN) and a Universal Approximator Network (UAN). The LNN takes in $(\q,\qd)$ to predict a Lagrangian $\lag(\q,\qd;\w_{\mathrm{c}})$ in the intermediate layer and outputs $f_{\mathrm{c}}^{NN}(\q,\qd;\w_{\mathrm{c}})$ calculated from Eq.~(\ref{eq:AL}), where $\w_{\mathrm{c}}$ are LNN parameters. The UAN is a pure black box (a fully connected neural network) that takes in $(\q,\qd,t)$ and outputs $f_{\mathrm{n}}^{NN}(\q,\qd,t;\w_{\mathrm{n}})$ where $\w_{\mathrm{n}}$ are parameters of the black box. The two outputs are summed to predict the full force field
\begin{equation}
    f^{NN}(\q,\qd,t;\w_{\mathrm{c}},\w_{\mathrm{n}})=f_{\mathrm{c}}^{NN}(\q,\qd;\w_{\mathrm{c}})+f_{\mathrm{n}}^{NN}(\q,\qd,t;\w_{\mathrm{n}}).
\end{equation}

{We take both \textit{recovery error} and \textit{minimal non-conservation} into considerations to design our loss function:} (1) $f^{NN}$ should recover ground truth $f$; (2) we make maximal use of $f_{\mathrm{c}}^{NN}$ and reduce $f_{\mathrm{n}}^{NN}$ as much as possible (e.g. when $f$ is conservative, we hope that  $f_{\mathrm{n}}^{NN}$ vanishes). Guided by these two principles, we define our loss function as follows (denoting the $i^{th}$ sample $\x^{(i)}\equiv (\q^{(i)},\qd^{(i)})$):
{\small\begin{equation}\label{eq:loss}	    
\begin{aligned}
	&L_{NNPhD}(\w_{\mathrm{c}},\w_{\mathrm{n}})= L_e(\w_{\mathrm{c}},\w_{\mathrm{n}}) + \lambda L_b(\w_{\mathrm{n}}),\\
    &L_b(\w_{\mathrm{n}}) \equiv \left(\frac{1}{Nn}\sum_{i=1}^N ||f_{\mathrm{n}}^{NN}(\x^{(i)},t^{(i)};\w_{\mathrm{n}})||^p\right)^{\frac{1}{p}},\\
        &L_e(\w_{\mathrm{c}},\w_{\mathrm{n}}) \equiv \\ &\left(\frac{1}{Nn}\sum_{i=1}^N ||f_{\mathrm{c}}^{NN}(\x^{(i)};\w_{\mathrm{c}})+f_{\mathrm{n}}^{NN}(\x^{(i)},t^{(i)};\w_{\mathrm{n}})-f(\x^{(i)},t^{(i)})||^p\right)^{\frac{1}{p}},
\end{aligned}
\end{equation}}

{\noindent where $p\geq 1$ and the regularization coefficient $\lambda>0$. The factors $\frac{1}{N}$ and $\frac{1}{n}$ average over samples and degrees of freedom, respectively.
Here we use $L_p$ function norms, i.e. $\|f\|\equiv (\int |f|^pd\mu)^{1/p}$, where the integral is replaced by averaging over finite training samples. $L_e$ is the recovery error and $L_b$ penalizes the black box module to discourage it from learning conservative dynamics.}

\subsection{The regularization phase transition}\label{subsec:phase_transition}

Does minimizing Eq.~(\ref{eq:loss}) yield the force decomposition of Eq.~(\ref{eq:non-conservation-minimization})? We offer an affirmative answer to this question by presenting Theorem \ref{thm:1} informally here. Appendix~\ref{sec:Theorem1_formal} provides a rigorous formulation and proof of this theorem.

\begin{reptheorem}{thm:1}
(Informal) Suppose $f_{\mathrm{c}}^{NN}$ and $f_{\mathrm{n}}^{NN}$ can represent any conservative force field and any (continuous) force field, and $(f_{\mathrm{c}}^*,f_{\mathrm{n}}^*)$ denotes the pair that minimizes NNPhD loss from Eq.~(\ref{eq:loss}). Then we have a phase transition at $\lambda=1$ such that $(f_{\mathrm{c}}^*,f_{\mathrm{n}}^*)=(f_{\mathrm{c}},f_{\mathrm{n}})$ when $0<\lambda<1$, and $(f_{\mathrm{c}}^*,f_{\mathrm{n}}^*)=(f_{\mathrm{c}},0)$ when $\lambda>1$.
\end{reptheorem}

Theorem \ref{thm:1} has two interesting and useful implications: (1) {\it sharp phase transition}: 
The recovery error $L_e=0$ when $\lambda<1$ and $L_e=\|f_{\mathrm{n}}\|>0$ when $\lambda>1$. As a result, non-conservative dynamics predicts an error jump of $L_e$ at $\lambda=1$, while conservative dynamics does not. This phenomenon justifies the term ``detector" in our model name, in the sense that non-conservative dynamics is \textit{detected} by the sharp phase transition at $\lambda=1$. (2) {\it effortless $\lambda$ tuning}: Any $\lambda\in(0,1)$ would achieve the force decomposition. Below we report numerical experiments showing that in practice, too small $\lambda$ do not regularize UAN effectively, and force decomposition results are robust for $0.05\lesssim\lambda<1$ independent of dynamical systems at study.

As we will see in Appendix~\ref{sec:Theorem1_formal}, the proof is more complicated than one might naively expect. If conservative force fields formed a linear subspace, then 
the conservative component $f_c$ from \eq{eq:non-conservation-minimization} 
would simply be the orthogonal projection onto that space, and the non-conservative residual $f_n$ would be orthogonal to that subspace.
But conservative force fields as we have defined them generally do {\it not} form a linear subspace, \ie, the sum of two energy-conserving force fields may not conserve energy, which is related to the nonlinear nature of \eq{eq:AL}.

\section{Results from numerical experiments}\label{sec:experiments}

In this section, we test our NNPhD algorithm with a series of numerical examples defined in Table I. In Section \ref{sec:exp_new_physics}, we quantify its ability to rediscover symbolic expressions for ``new physics" such as friction, Neptune and gravitational waves. %and unseen ``dark" particles.
In Section \ref{sec:prediction_exp}, we show that, although NNPhD is designed for new physics detection, it can also outperform baseline trajectory prediction for the damped double pendulum example.
In Section \ref{sec:exp_decomp_subsec}, we use toy examples to verify and quantify the aforementioned $\lambda$-dependent phase transition, and explore how the choices of $p$ and $\lambda$ in Eq.~(\ref{eq:loss}) influence algorithm behavior. Finally we discuss how data quality affects identifiability of new physics in Section~\ref{sec:data_quality_exp}. Further technical details on model parameters, simulations and neural network architecture are provided in Appendix \ref{app:equations}.

\subsection{Discovery of New Physics}\label{sec:exp_new_physics}

We now test NNPhD on three numerical examples defined in Table \ref{tab:four_new_physics_equations}, to see if it can rediscover friction (1493),  Neptune (1846) and gravitational wave emission (2017). %and toy ``dark matter" \cite{HUTCHINGS201651,neptune,abramovici1992ligo,bernabei2003dark}, using the dynamical equations given in Table \ref{tab:four_new_physics_equations}.
In all three cases, the force fields defined by the right hand side are the sum of a conservative part (the first term) and a non-conservative ``new physics" part (the second term) that we hope to discover. Before delving into our numerical experiments, let us briefly comment on how we model these three dynamical systems.

\subsubsection{Physical systems tested}\label{sec:physica_system_tested}

\begin{table*}[ht]
    \centering
    \begin{tabular}{|c|c|}\hline
    Model    &  Equation \\\hline
    \shortstack{Damped \\ Double Pendulum}  &  $ \begin{pmatrix}
		\ddot{\theta}_1\\
		\ddot{\theta}_2
	\end{pmatrix}=
	\begin{pmatrix}
		\frac{m_2l_1\dot{\theta}_1^2{\rm sin}(\theta_2-\theta_1){\rm cos}(\theta_2-\theta_1)+m_2g{\rm sin}\theta_2+m_2l_2\dot{\theta}_2^2{\rm sin}(\theta_2-\theta_1)-(m_1+m_2)g{\rm sin}\theta_1}{(m_1+m_2)l_1-m_2l_1{\rm cos}^2(\theta_2-\theta_1)}\\
		\frac{-m_2l_2\dot{\theta}_2^2{\rm sin}(\theta_2-\theta_1)+(m_1+m_2)(g{\rm sin}\theta_1{\rm cos}(\theta_2-\theta_1)-l_1\dot{\theta}_2^2{\rm sin}(\theta_2-\theta_1)-g{\rm sin}\theta_2)}{(m_1+m_2)l_1-m_2l_1{\rm cos}^2(\theta_2-\theta_1)}
	\end{pmatrix}
	-\gamma \begin{pmatrix}
		\dot{\theta}_1\\
		\dot{\theta}_2
	\end{pmatrix}$ \\\hline 
	Neptune & $\begin{pmatrix}
		\ddot{x}\\
		\ddot{y}\\
	\end{pmatrix}=
	\begin{pmatrix}
		-\frac{GM_\odot x}{(x^2+y^2)^{\frac{3}{2}}}\\
		-\frac{GM_\odot y}{(x^2+y^2)^{\frac{3}{2}}}\\
	\end{pmatrix}
	+
	\begin{pmatrix}
		\frac{ GM_n(-x+r_n{\rm cos}(\omega_n t))}{[(x-r_n{\rm cos}(\omega_n t))^2+(y-r_n{\rm sin}(\omega_n t))^2]^{\frac{3}{2}}}\\
		\frac{ GM_n(-y+r_n{\rm sin}(\omega_n t))}{[(x-r_n{\rm cos}(\omega_n t))^2+(y-r_n{\rm sin}(\omega_n t))^2]^{\frac{3}{2}}}\\
	\end{pmatrix}$\\\hline
	Gravitational Radiation & $\begin{pmatrix}
		\ddot{x}\\
		\ddot{y}\\
	\end{pmatrix}=
	\begin{pmatrix}
		-\frac{G(M_1+M_2)x}{(x^2+y^2)^{\frac{3}{2}}}\\
		-\frac{G(M_1+M_2)y}{(x^2+y^2)^{\frac{3}{2}}}\\
	\end{pmatrix}
	+\frac{32M_1M_2(M_1^2+M_2^2)}{5Gc^5(M_1+M_2)^5}
	\begin{pmatrix}
		-(\dot{x}_i^2+\dot{y}_i^2)^{4}\dot{x}_i\\
		-(\dot{x}_i^2+\dot{y}_i^2)^{4}\dot{y}_i\\
	\end{pmatrix}
	$ \\\hline
	%``Dark Matter" & $\begin{pmatrix}
	%	\ddot{x}_i\\
	%	\ddot{y}_i\\
	%\end{pmatrix}=
	%\sum_{j=1}^{N_{o}}
	%\begin{pmatrix}
	%	-\frac{\partial U(r_{ij})}{\partial x_j}\\
	%	-\frac{\partial U(r_{ij})}{\partial y_j}\\
	%\end{pmatrix}
	%+
	%\sum_{k=N_o+1}^{N}
	%\begin{pmatrix}
	%-\frac{\partial U(r_{ik})}{\partial x_k}\\
	%	-\frac{\partial U(r_{ik})}{\partial y_k}\\
	%\end{pmatrix},\quad i=1,\dots, N_o$
	%\\\hline
    \end{tabular}
    \caption{We test if NNPhD can automatically decompose these three force fields into a conservative part (first term) and a non-conservative part (second term) corresponding to the ``new physics".}
    \label{tab:four_new_physics_equations}
\end{table*}

{\bf Friction} Italian polymath Leonardo da Vinci first recorded the basic laws of friction in 1493. We add friction to the double pendulum system and to test if NNPhD can automatically discover the friction force solely from data. The damped double pendulum example can be described by two angles and their derivatives \ie, 
$\q=(\theta_1,\theta_2)$ and $\qd=(\dot{\theta}_1,\dot{\theta}_2)$.
In our numerical experiment, we choose the physical parameters $m_1=m_2=g=l_1=l_2=1$, 
$\gamma=0.02$.

{\bf Neptune} \textit{Le Verrier} postulated the existence of Neptune in 1846: astronomers had found that Uranus' orbit around the Sun precessed in a way suggesting the presence of a force of unknown cause, later identified as Neptune.
Neptune was invisible at the time in the sense that contemporary astronomers could not observe its position or velocity, but Le Verrier (and NNPhD) were able to identify the existence of a third body by identifying a non-conservative contribution to the force field of the two-body system. 
For our numerical experiments, we make the simplifying assumptions that (1) the Sun remains fixed at the origin, (2) the elliptical orbits of Uranus and Neptune are circular (have  eccentricity $e=0$) and lie in the same plane, (3) Neptune's orbit is unaffected by Uranus, and (4) the effects of other planets are negligible. Here $x$ and $y$ denote the coordinates of Uranus, and time $t$ is measured in units such that Uranus' orbital period is $2\pi\sqrt{2^3}$. We choose $G=1$, mass of Sun $M_{\odot}=1$. Neptune's mass, orbital radius and angular velocity are $M_n=0.005$, $r_n=3$ and $\omega_n=3^{-\frac{3}{2}}\approx 0.192$.

{\bf Gravitational Radiation} 
As predicted by Einstein, the gravitational two-body problem is non-conservative, since the system radiates gravitational radiation that carries away energy and causes orbital decay. Experimental confirmation of this garnered Nobel Prizes both in 1993 (for the Hulse-Taylor pulsar) and in 2017 (for the LIGO discovery of gravitational waveforms from black hole mergers), and there is great current interest in exploiting such signals both for gravitational wave astronomy and for precision tests of general relativity.
To test whether NNPhD can auto-discover the non-conservative force caused by gravitational wave back-reaction solely from black hole trajectories, we simulate a binary black hole inspiral
using the approximation from \cite{ligo2017basic} that the 
radiated gravitational wave power 
$P
=\frac{32}{5}\frac{G}{c^5}\mu^2r^4\Omega^6%\propto -r^4 \Omega^6
=\frac{32}{5}\frac{G}{c^5}\mu^2\frac{v^6}{r^2}$ in a slowly decaying circular orbit (of radius $r$, angular frequency $\Omega$ and reduced mass $\mu\equiv (M_1^{-1}+M_2^{-1})^{-1}$)
equals the energy loss rate $-dE/dt=v f$ %$=\Omega r f$ 
from a dissipative back-reaction force $f$.
Using $\Omega\propto r^{-3/2}$ and $v\propto r^{-1/2}$ from Kepler's 3rd law gives  $P\propto v^{10}$, with a total force
\beq{InspiralEq}
\fvec = \mu\ddot{\rvec} = -{GM_1M_2\over r^3}\rvec -\frac{32M_1^2M_2^2(M_1^2+M_2^2)}{5Gc^5(M_1+M_2)^6} v^8\vvec,
\eeq
corresponding to an acceleration

\beq{InspiralEq_acc}
\ddot{\rvec} = -{G(M_1+M_2)\over r^3}\rvec -\frac{32M_1M_2(M_1^2+M_2^2)}{5Gc^5(M_1+M_2)^5} v^8\vvec,
\eeq
where $\rvec=\rvec_2-\rvec_1$ and $\vvec=\vvec_2-\vvec_1$. We choose these physical parameters to be $G=M_1=M_2=1, c=3$.

\subsubsection{Detection of new physics}\label{sec:detection_subsec}

These three physical systems have $d=2$ degrees of freedom, obeying the second-order coupled differential equations 
in Table \ref{tab:four_new_physics_equations}.
Including the corresponding conjugate momenta, a system's state is thus a point moving along some
trajectory in a $2d$-dimensional phase space, satisfying a $2d$ coupled first-order coupled differential equations. We solve these equations and compute the trajectories numerically using 
a 4th-order Runge-Kutta integrator at $N_{step}=\{300,1000,300%,60
\}$ timesteps of size $\varepsilon=\{0.1,0.1,0.05%,0.01
\}$ for the three physical systems, using the following initial conditions:

\begin{eqnarray}
\begin{aligned}
    (\theta_1,\theta_2,\dot{\theta}_1, \dot{\theta}_2)&=(1,0,0,0)\\
    (x,y,\dot{x},\dot{y})&=(3,0,0,\frac{1}{\sqrt{3}})\\
    (x,y,\dot{x},\dot{y})&=(0,2,-1,0)
    %(x_i,y_i) &= (-7.5+j,-7.5+k)+(N(0,0.2^2),N(0,0.2^2))\\
    %&\quad\quad\quad\quad j=\lfloor \frac{i}{16}\rfloor, k=i-16j,
    %i=1,2,\cdots,256
\end{aligned}
\end{eqnarray}
Once trajectory points are calculated, the ground truth forces $f$ at those points are evaluated using the formula in TABLE \ref{tab:four_new_physics_equations}~\footnote{In more realistic settings, one would first extract $\qdd$ from trajectory data, e.g. with Neural ODE~\cite{chen2018neural} or AI Physicist~\cite{PhysRevE.100.033311}, and then use $\qdd$ as labels to train NNPhD. We treat $\qdd=f(\q,\qd,t)$ as an oracle in this paper since we focus on the force field decomposition aspect.}. We do not hold back any testing data in this section, since many insights can be gained solely from training data. We will hold back testing data and verify NNPhD's generalization ability in Section \ref{sec:prediction_exp}.

We then train NNPhD on the aforementioned trajectory data  as detailed in Appendix~\ref{app:training_details}.
FIG.~\ref{fig:new_physics_models}  shows the resulting NNPhD prediction loss $L_e$ as a function of $\lambda$, revealing a striking phase transition at $\lambda=1$: for $\lambda<1$, $L_e$ is almost zero, while for $\lambda>1$, $L_e$ is an approximately constant positive number, indicating the magnitude of non-conservative components.%\footnote{Note that we have divided the ``dark matter" prediction loss by 5.4 to make its magnitude comparable to the other curves; this is because the new physics is only a perturbation around old physics in the first three examples, while new physics is dominant over old physics for the dark matter example (with 5.4 times more ``dark" particles)}
 
 As we showed above, such a phase transition is a smoking-gun signature of new physics  manifesting
as non-conservative dynamics. The observed phase transitions thus justify the NNPhD name.

\begin{figure}[htbp]
	\centering
	\includegraphics[width=1.0\columnwidth, trim=0cm 1cm 0cm 0cm]{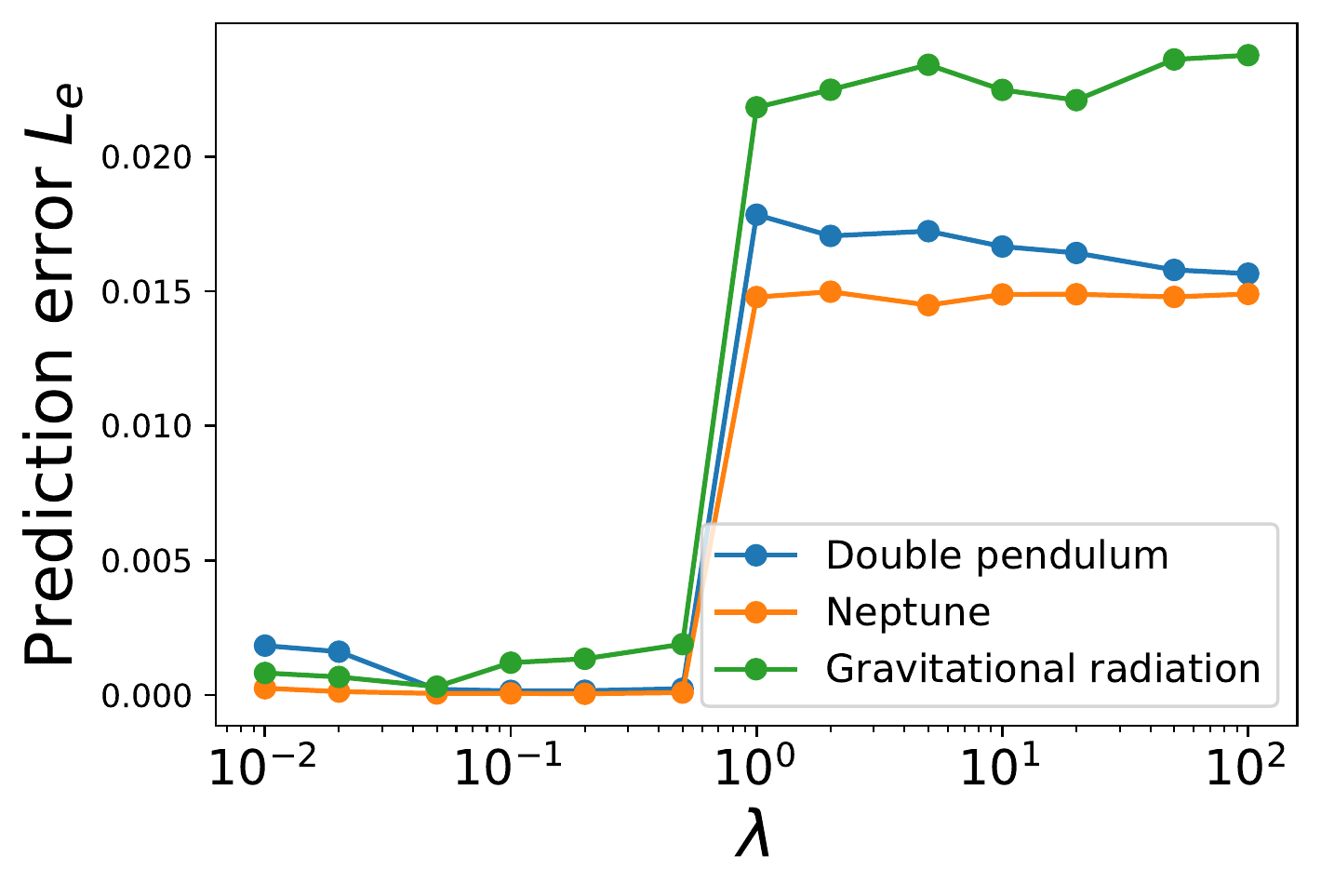}
	\caption{In all our three examples, clear phase transitions at $\lambda=1$ indicate the existence of new physics.}
	\label{fig:new_physics_models}
	\vskip -0.4cm
\end{figure}

\begin{table*}[]
	\centering
	\caption{Symbolic Formulas Discovered by NNPhD}
	\begin{tabular}{|c|c|c|c|}\hline
		Physics Example & Target & Ground Truth ``New Physics" & NNPhD+Symbolic \\\hline
		Double Pendulum & $\begin{pmatrix}
			\ddot{\theta}_1\\
			\ddot{\theta}_2\\
		\end{pmatrix}$ & 
		$\begin{pmatrix}
			-{\bf 0.02}\dot{\theta}_1-{\bf 0.00}\dot{\theta}_2\\
			-{\bf 0.00}\dot{\theta}_1-{\bf 0.02}\dot{\theta}_2\\
		\end{pmatrix}$  & 
		$\begin{pmatrix}
			-{\bf 0.018}\dot{\theta}_1-{\bf 0.001}\dot{\theta}_2\\
			-{\bf 0.001}\dot{\theta}_1-{\bf 0.018}\dot{\theta}_2
		\end{pmatrix}$
		\\\hline
		Neptune & 
		$\begin{pmatrix}
			\ddot{x}\\
			\ddot{y}\\
		\end{pmatrix}$ & 
		\scalebox{0.8}{
			$\begin{pmatrix}
				\frac{{\bf 0.005}(-x+{\bf 3}{\rm cos}({\bf 0.192}t))}{[(x-{\bf 3}{\rm cos}({\bf 0.192}t))^2+(y-{\bf 3}{\rm sin}({\bf 0.192}t))^2]^{\frac{3}{2}}}\\
				\frac{{\bf 0.005}(-y+3{\rm sin}({\bf 0.192}t))}{[(y-{\bf 3}{\rm sin}({\bf 0.192}t))^2+(y-{\bf 3}{\rm sin}({\bf 0.192}t))^2]^{\frac{3}{2}}}\\
			\end{pmatrix}$}
		& 
		\scalebox{0.8}{
			$\begin{pmatrix}
				\frac{{\bf 0.0052}(-x+{\bf 3.004}{\rm cos}({\bf 0.192}t))}{[(x-{\bf 3.004}{\rm cos}({\bf 0.192}t))^2+(y-{\bf 3.004}{\rm sin}({\bf 0.192}t))^2]^{\frac{3}{2}}}\\
				\frac{{\bf 0.0052}(-y+{\bf 3.004}{\rm sin}({\bf 0.192}t))}{[(y-{\bf 3.004}{\rm sin}({\bf 0.192}t))^2+(y-{\bf 3.004}{\rm sin}({\bf 0.192}t))^2]^{\frac{3}{2}}}\\
			\end{pmatrix}$}\\\hline
		Gravitational Radiation &
		$\begin{pmatrix}
			\ddot{x}_1\\
			\ddot{y}_1\\
		\end{pmatrix}$ & 
		$\begin{pmatrix}
			-{\bf 0.00165}(\dot{x}_1^2+\dot{y}_1^2)^{{\bf 4}}\dot{x}_1\\
			-{\bf 0.00165}(\dot{x}_1^2+\dot{y}_1^2)^{{\bf 4}}\dot{y}_1\\
		\end{pmatrix}$  & 
		$\begin{pmatrix}
			-{\bf 0.00170}(\dot{x}_1^2+\dot{y}_1^2)^{{\bf 3.94}}\dot{x}_1\\
			-{\bf 0.00170}(\dot{x}_1^2+\dot{y}_1^2)^{{\bf 3.94}}\dot{y}_1\\
		\end{pmatrix}$ \\\hline
		% Dark Matter ($U$:potential) & 
		%$\begin{pmatrix}
		%	\ddot{x}\\
		%	\ddot{y}
		%\end{pmatrix}$ &
		%$\begin{pmatrix}
		%	-\frac{\partial}{\partial x}\sum_{i=1}^{N}U(x_i,y_i,x,y)\\
		%	-\frac{\partial}{\partial y}\sum_{i=1}^{N}U(x_i,y_i,x,y)\\
		%\end{pmatrix}$  & 
		%$\begin{pmatrix}
		%	-(1+t)x+t\\
		%	-(1+t-2.98t^2)y+tx
		%\end{pmatrix}$, etc. \\\hline
	\end{tabular}
	\label{tab:symbolic_new_physics}
	\vskip -0.4cm
\end{table*}

\subsubsection{Modeling of New Physics with Symbolic Expressions}\label{sec:symbolic_subsec}

After detecting the existence of new physics, physicists 
are interested in understanding and explaining this new physics by describing it
with via symbolic expressions. 
We found that if we did not impose any inductive biases on the LNN, we
unfortunately did not auto-discover ant meaningful symbolic expressions.
We therefore drew inspiration from the history of physics, where inductive biases have routinely been used. For example, physicists often knew and used analytic formulas for the old physics when quantifying new physics.
In this spirit, we constrain the form of LNN Lagrangian so that only a set of coefficients are learnable, while the UAN remains to a fully general
feedforward neural network with two hidden layers containing 200 neurons each. Specifically, we  parametrize the Lagrangians for our three examples as follows:
{\footnotesize \begin{eqnarray}
\begin{aligned}
    &\lag_{\rm fric}=c_1{\cos}\theta_1+c_2{\rm cos}\theta_2+c_3\dot{\theta}_1^2+c_4\dot{\theta}_2^2+c_5\dot{\theta}_1\dot{\theta}_2{\rm cos}(\theta_1-\theta_2)\\
    &\lag_{\rm neptune}=c_1\dot{x}^2+c_2\dot{y}^2+\frac{c_3}{\sqrt{x^2+y^2}}\\
    &\lag_{\rm grav}=c_1\dot{x}^2+c_2\dot{y}^2+\frac{c_3}{\sqrt{x^2+y^2}}
    %&\lag_{dark}=\sum_{i=1}^{N}\sum_{j>i}^{N}cU(r_{ij}),\  r_{ij}=\sqrt{(x_i-x_j)^2+(y_i-y_j)^2}
\end{aligned}
\end{eqnarray}}
This is implemented by inputting hand-crafted features (${\rm cos}\theta, x^2$, \etc) into a learnable linear layer which outputs the predicted Lagrangian. We adopt a train-and-explain strategy: 
\begin{enumerate}
\item {\bf Training:} Like before, we train the whole NNPhD (LNN and UAN are updated simultaneously) with $\lambda=0.2$ using the ADAM optimizer with annealing learning rate $\{10^{-2}, 10^{-3}, 10^{-4}, 10^{-5}\}$ for 2000 steps. 
\item {\bf Explaining:} After training, we aim to extract more interpretable physics from the UAN via constrained nonlinear optimization of free parameters (displayed as {\bf bold} in Table \ref{tab:symbolic_new_physics}) to explain the output of the black-box, since ground truth symbolic forms are available. %For the dark matter example where no (brief) symbolic ground truth is known, we use {\tt gplearn} and {\tt AI Feynman}~\cite{udrescu2020ai, udrescu2020symbolic} for symbolic regression. 
\end{enumerate}
In Table \ref{tab:symbolic_new_physics}, we show ground truth ``new physics" and NNPhD discovered symbolic expressions. Fitted coefficients are seen to match ground truth quite well: (1) damping coefficient; (2) orbital radius and angular velocity of Neptune around the Sun; (3) magnitude and velocity dependence of gravitational wave emission.

\begin{comment}

Dark matter and ordinary matter particles are simulated as an $N$-body system interacting with only gravitational forces. Because simulation data are on the particle level, it is non-trivial to obtain a mean-field symbolic expression for gravitational field of dark matter. {\tt gplearn} uses genetic algorithms and discovers a relatively complicated but inspiring formula (see Table \ref{tab:symbolic_new_physics}). The formula is simply polynomials of input variables, yet contains important physics of gravitational collapsing and momentum conservation breakdown etc. 
On the other hand, {\tt AI Feynman} is a physics-inspired tool that discovers simpler and more elegant equations. We list a few discovered equations on the Pareto frontier~\footnote{The definition of Pareto frontier for symbolic expressions can be found in ~\cite{udrescu2020ai}.}:
\begin{equation}
	\begin{aligned}
		\ddot{x}=23t^{3/2}, -{\rm tan}({\rm sin}x)\\
		\ddot{y}=28t^{3/2}, -y{\rm exp}(-y)
	\end{aligned}
\end{equation}

The first terms can be explained as ordinary matter's breakdown of total momentum due to dark matter and the power law behavior wrt time is particularly intriguing. The second terms reveal gravitational collapsing force along $x$ and $y$ direction respectively. Although symbolic expressions do not immediately reveal hidden physics, human scientists are more likely to gain insights from symbolic expressions rather than blackboxes such as trained neural networks.

\end{comment}

\begin{figure*}[ht]
	\centering
	\begin{subfigure}[b]{0.55\textwidth}        \includegraphics[trim=0cm 0.5cm 0cm 0cm, width=\textwidth]{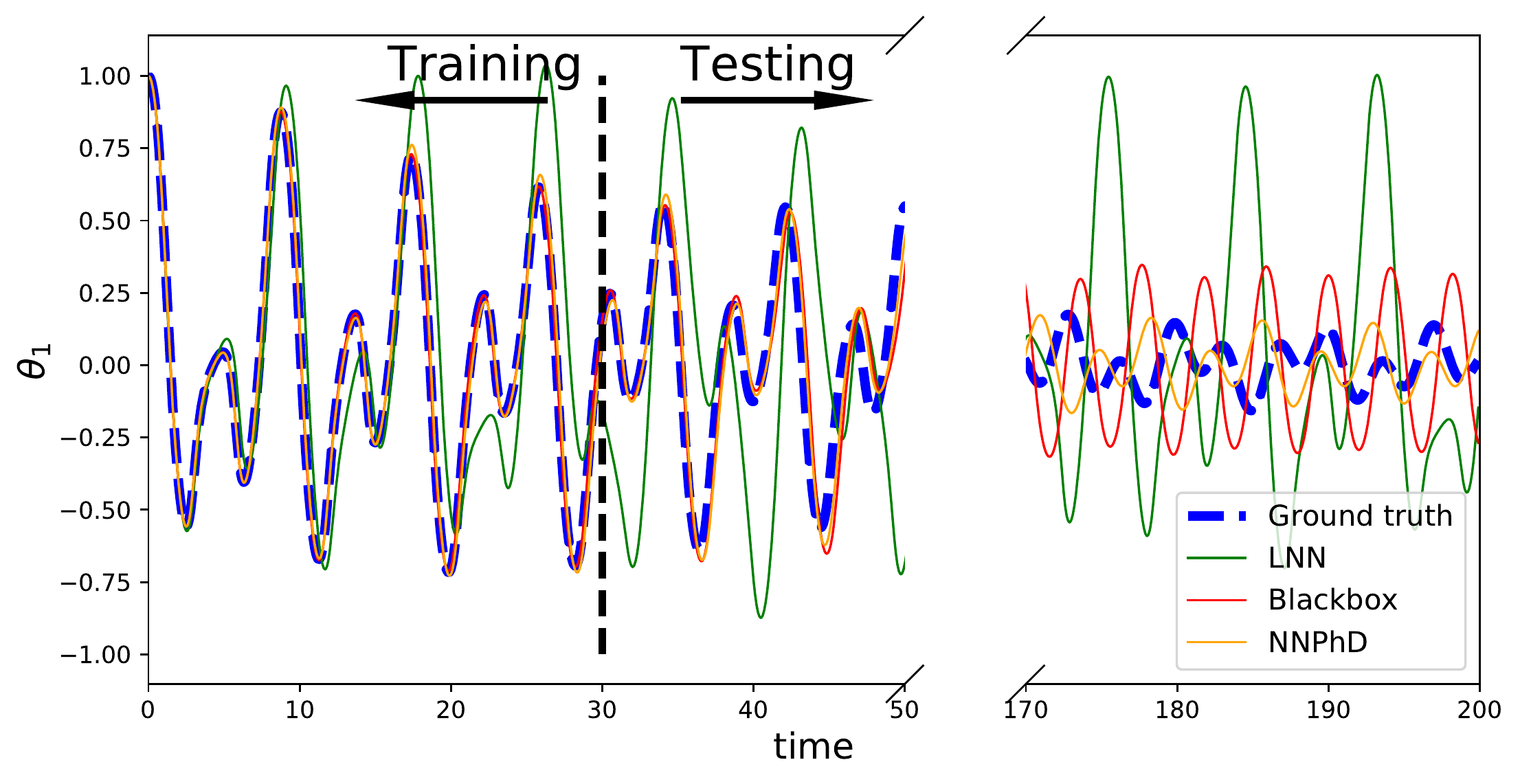}
		\caption{}
		\label{rfidtest_xaxis}
	\end{subfigure}
	\begin{subfigure}[b]{0.3\textwidth}
		\includegraphics[trim=0cm 0.5cm 0cm 0cm, width=\textwidth]{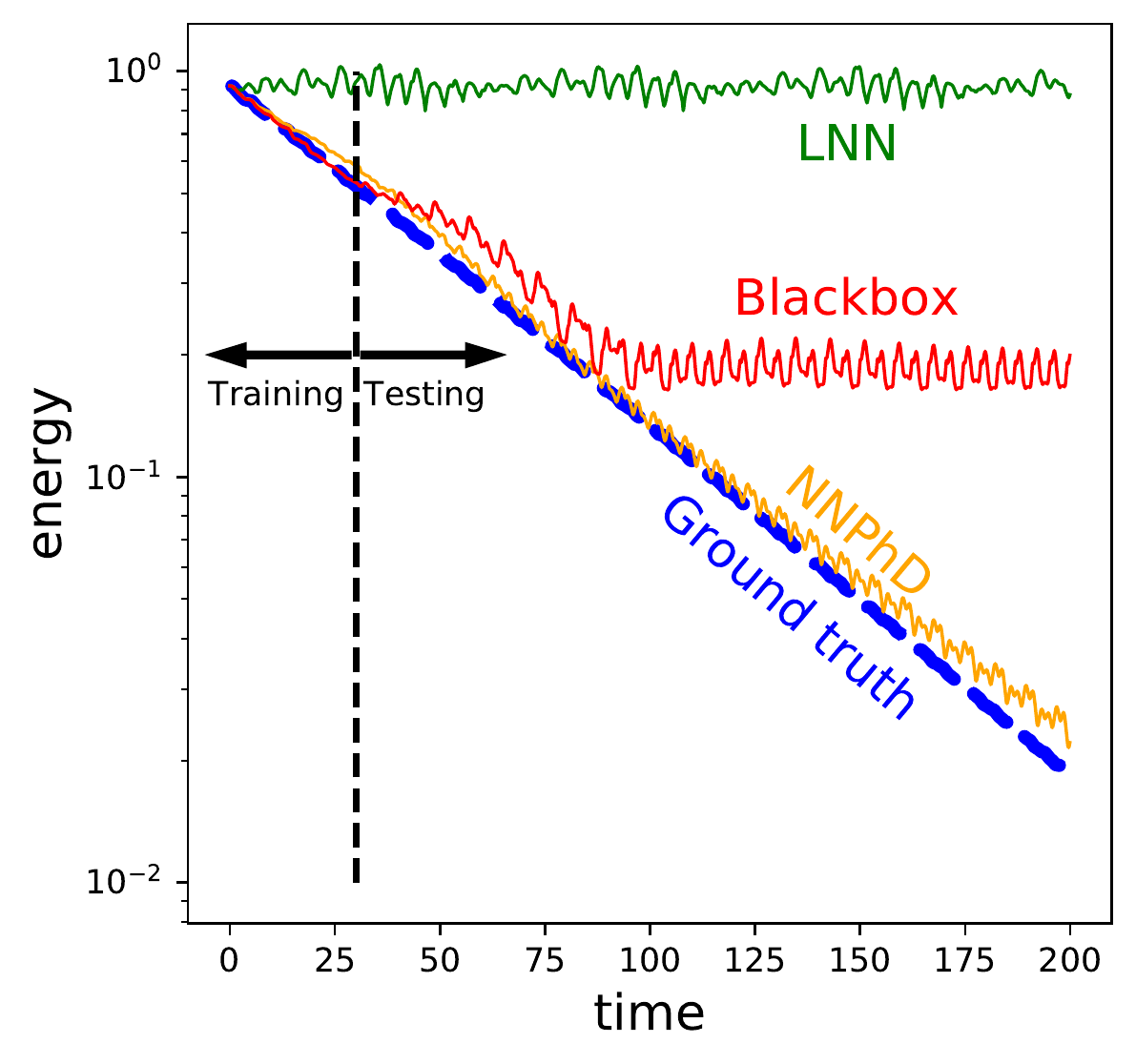}
		\caption{}
		\label{rfidtest_yaxis}
	\end{subfigure}
	\vskip -0.5cm
	\caption{Double Pendulum example: (a) Both NNPhD and the black-box  can accurately fit the angle $\theta_1$ of training samples, and can successfully extrapolate for a brief period, while LNN fails to model the non-conservative dynamics; (b) NNPhD correctly predicts the exponential energy decay on testing samples, while the black-box generalizes worse, and LNN incorrectly conserves energy.}
	\label{fig:double_pendulum}
\end{figure*}

\subsection{Prediction of Trajectories}\label{sec:prediction_exp}

In addition to discovering new physics, as we saw above, 
NNPhD can also compete with other methods on simple trajectory prediction,
and we will now test its performance for out-of-distribution generalization.
Specifically, we test how accurately it can extrapolate the trajectory
of the damped double pendulum from Section~\ref{sec:detection_subsec}, whose state is specified by two angles $(\theta_1,\theta_2)$ and corresponding angular velocities $(\dot{\theta}_1,\dot{\theta}_2)$. We compute a trajectory with a $4^{\rm th}$-order Runge-Kutta integrator at $N_{step}=2000$ timesteps of size $\varepsilon=0.1$ using the initial conditions $(\theta_1,\theta_2,\dot{\theta}_1, \dot{\theta}_2)=(1,0,0,0)$. Our test task is to extrapolate beyond $t=30$, so we split the trajectory into a training dataset $(0\leq t\leq 30)$ and a test dataset $(30\leq t\leq 200)$.

We train NNPhD with $\lambda=0.2$ and feed its prediction $f$ into a $4^{\rm th}$-order Runge-Kutta integrator to produce the predicted trajectory. 
Figure~\ref{fig:double_pendulum} compares the performance with that from
a LNN and a pure black box neural network.
The left panel shows that both NNPhD and the black box can fit $\theta_1$ well on training samples and extrapolate for a short period, but fail at larger times due to accumulated errors and sensitive phases. In contrast, we see that the LNN cannot even fit the training data,
because it has the invalid energy-conservation assumption built in.
The right panel shows that ground-truth energy is decaying exponential over time due to friction, while the LNN stubbornly predicts constant energy. 
NNPhD is seen to predict the energy decay best of the three methods, while the black-box slightly overpredicts  the 
the energy for a while and then incorrectly transitions to predicting approximate energy conservation.

\begin{figure*}[ht]
	\centering
	\begin{subfigure}[b]{0.5\columnwidth}        \includegraphics[trim=0cm 0.5cm 0cm 0cm, width=\textwidth]{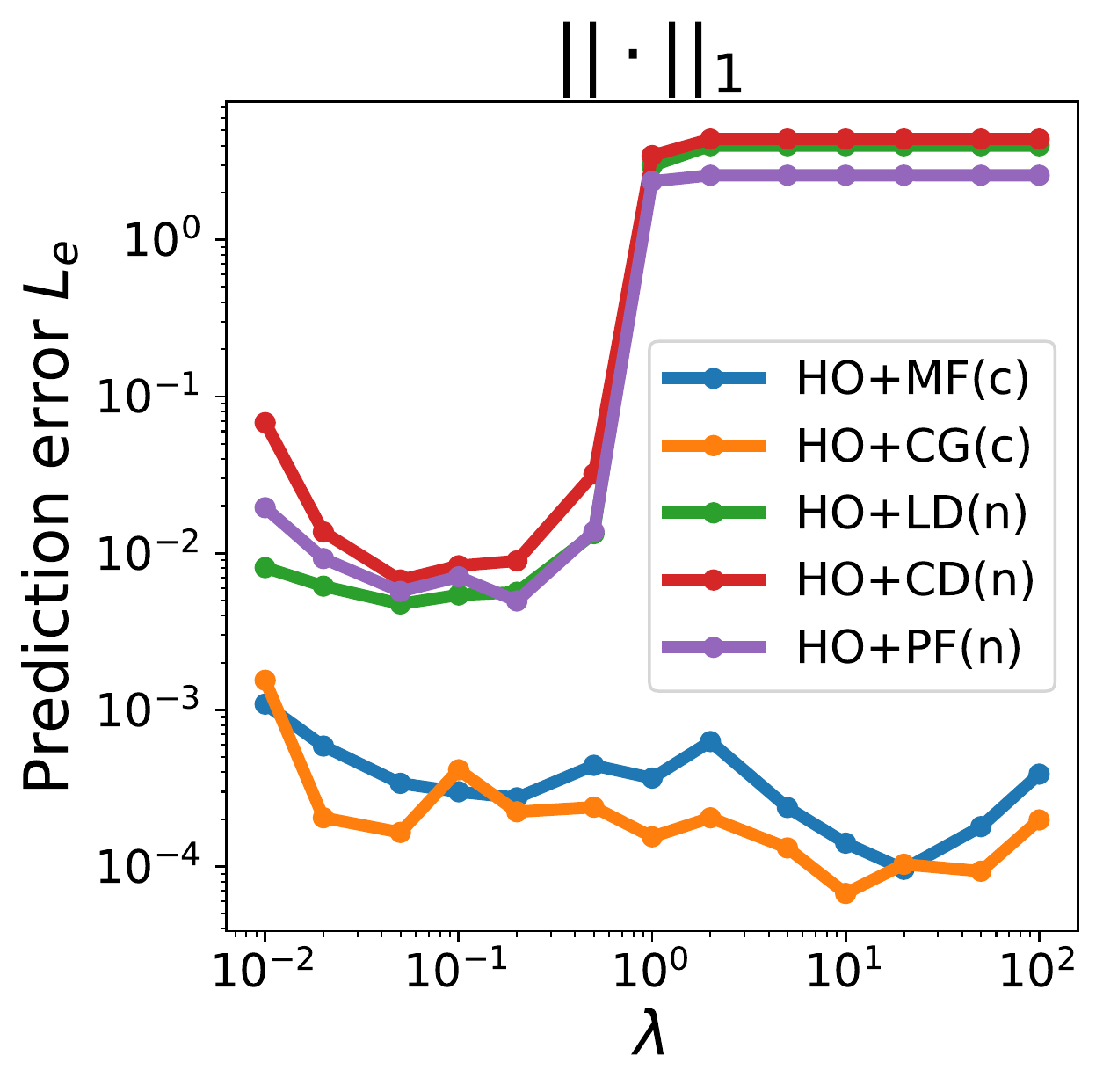}
		\caption{}
	\end{subfigure}
	\begin{subfigure}[b]{0.5\columnwidth}
		\includegraphics[trim=0cm 0.5cm 0cm 0cm,width=\textwidth]{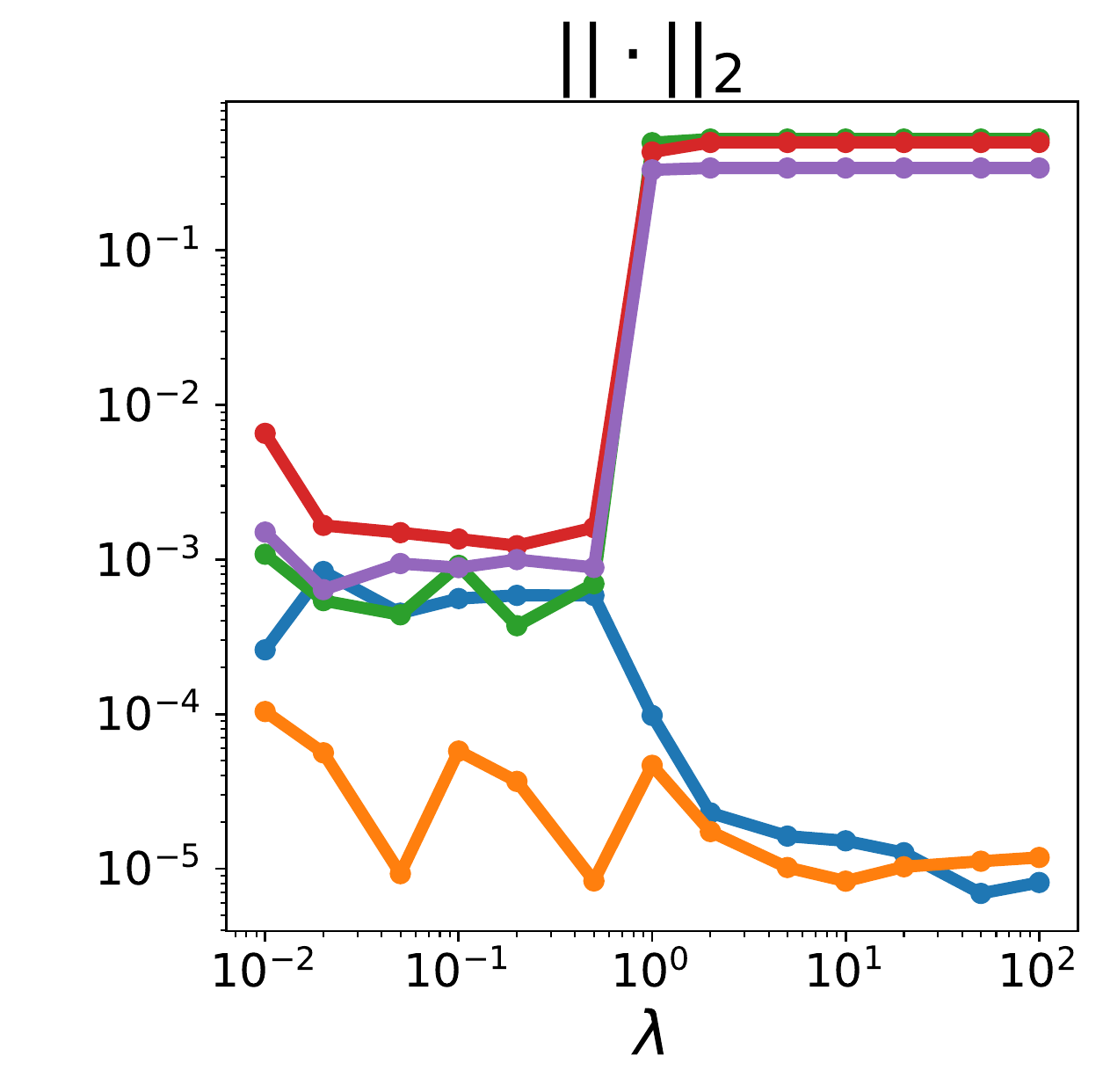}
		\caption{}
	\end{subfigure}
	\begin{subfigure}[b]{0.5\columnwidth}        \includegraphics[trim=0cm 0.5cm 0cm 0cm, width=\textwidth]{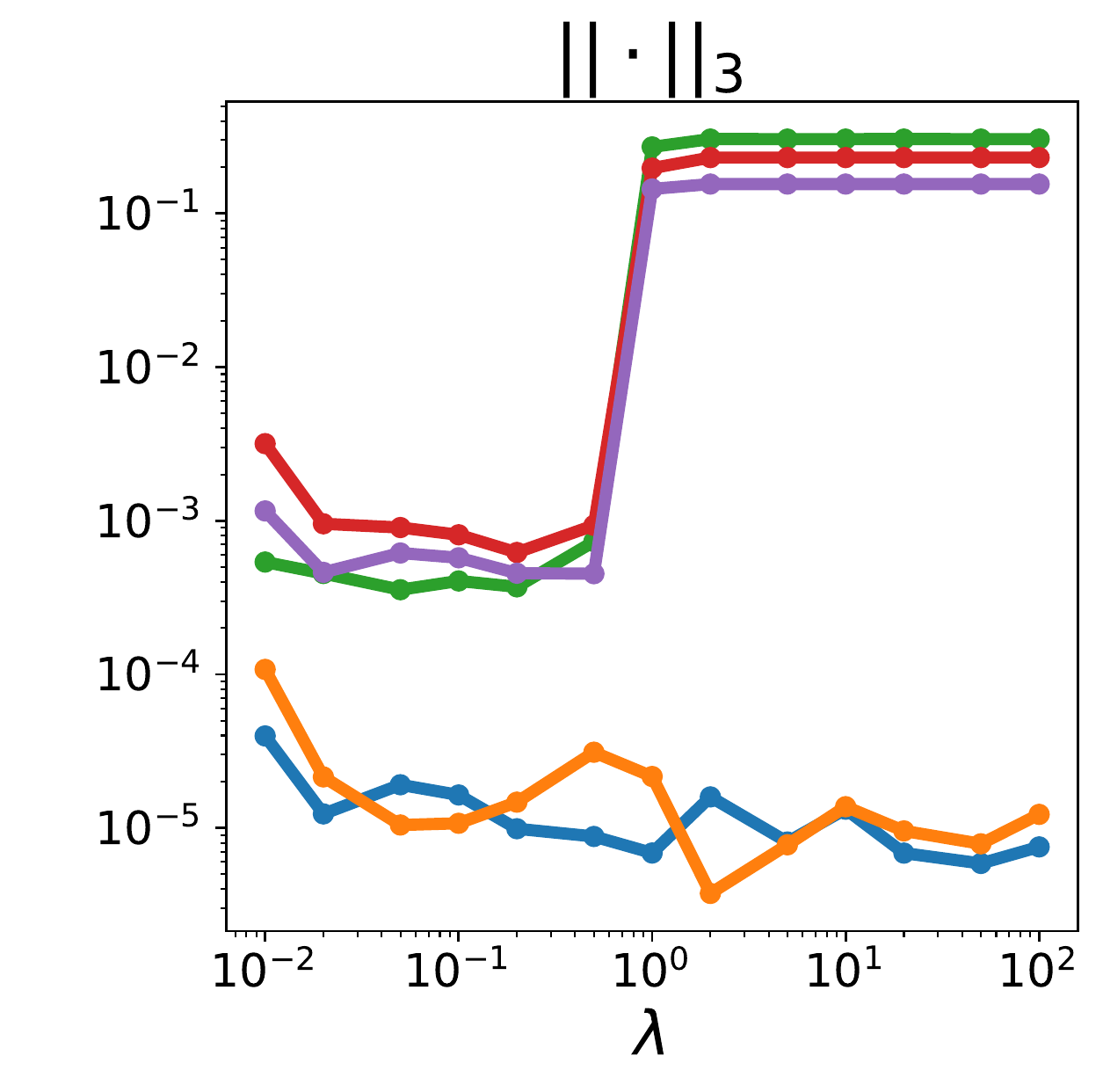}
		\caption{}
	\end{subfigure}
	\begin{subfigure}[b]{0.5\columnwidth}
		\includegraphics[trim=0cm 0.5cm 0cm 0cm,width=\textwidth]{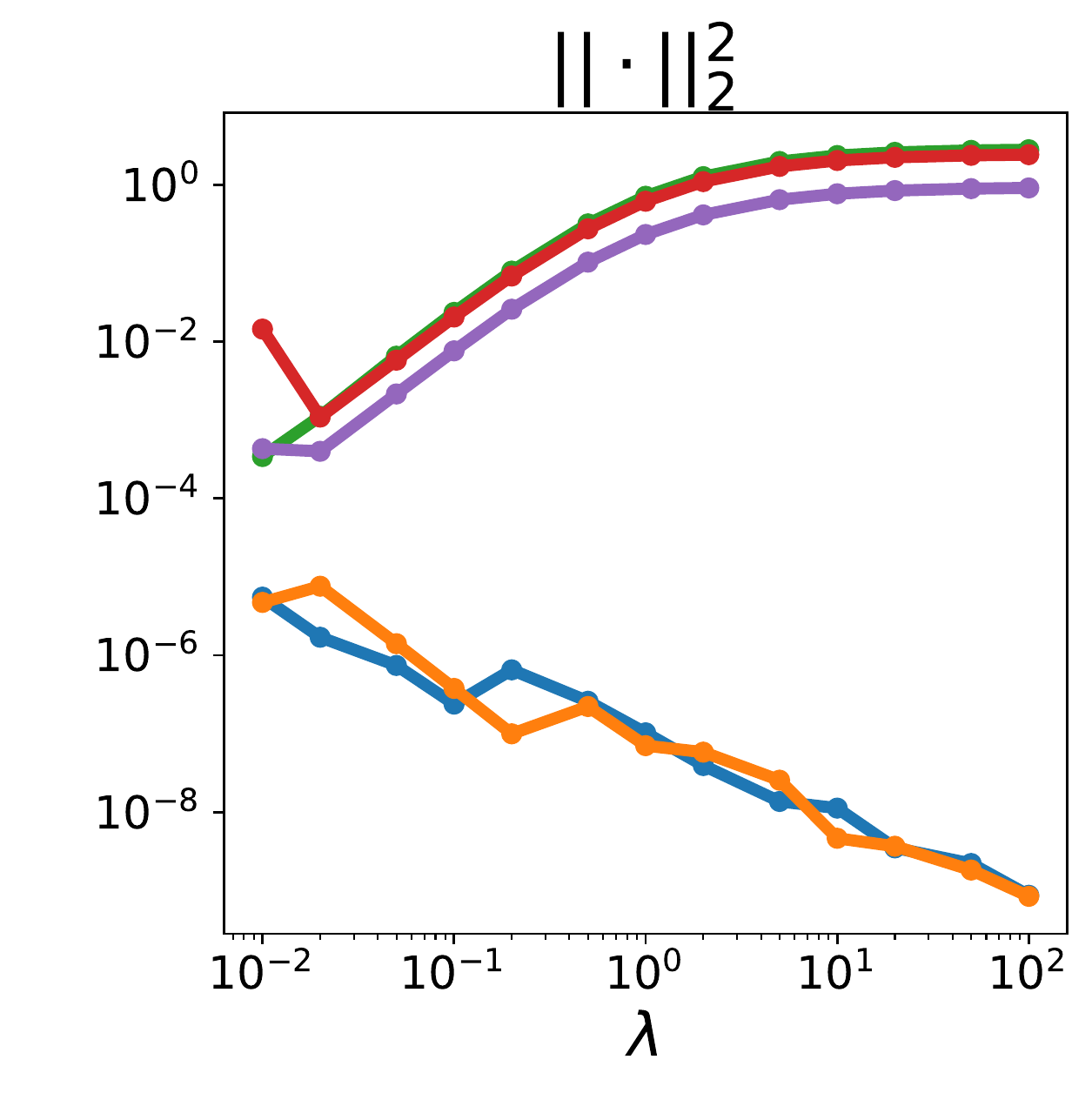}
		\caption{}
	\end{subfigure}

	\begin{subfigure}[b]{0.5\columnwidth}        
		\includegraphics[trim=0cm 0.5cm 0cm 0cm, width=\textwidth]{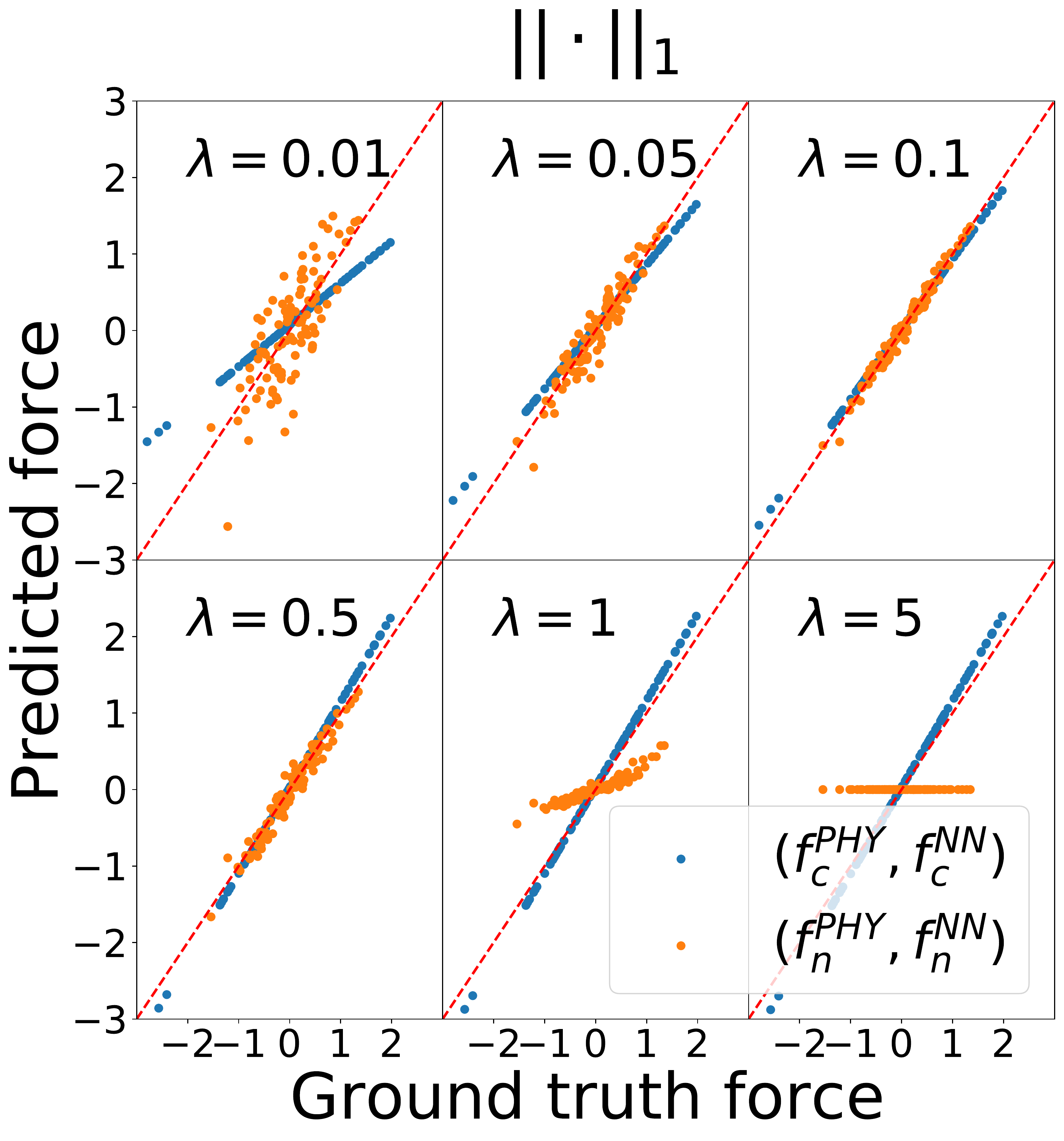}
		\caption{}
	\end{subfigure}
	\begin{subfigure}[b]{0.5\columnwidth}
		\includegraphics[trim=0cm 0.5cm 0cm 0cm,width=\textwidth]{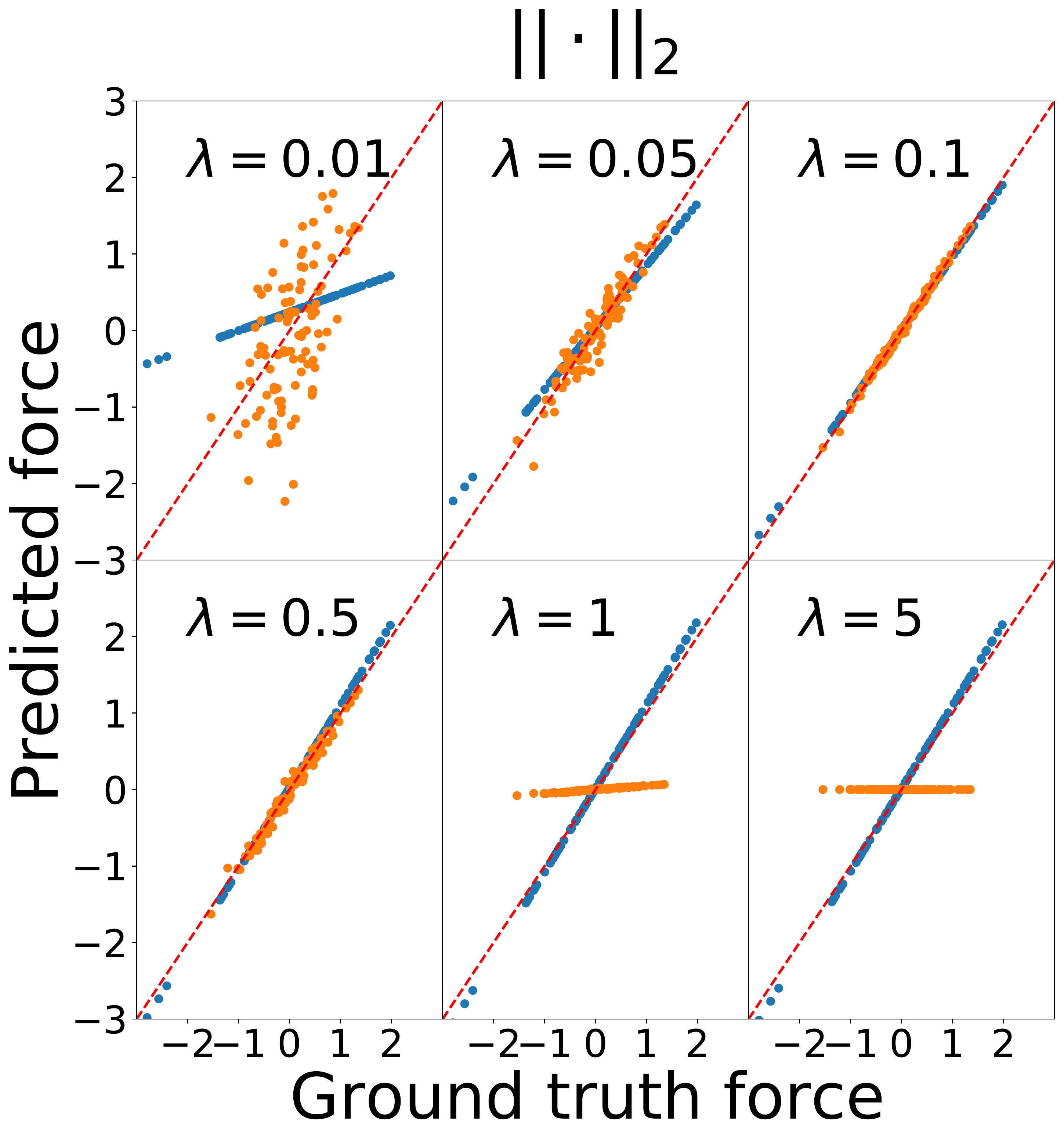}
		\caption{}
	\end{subfigure}
	\begin{subfigure}[b]{0.5\columnwidth}        
		\includegraphics[trim=0cm 0.5cm 0cm 0cm, width=\textwidth]{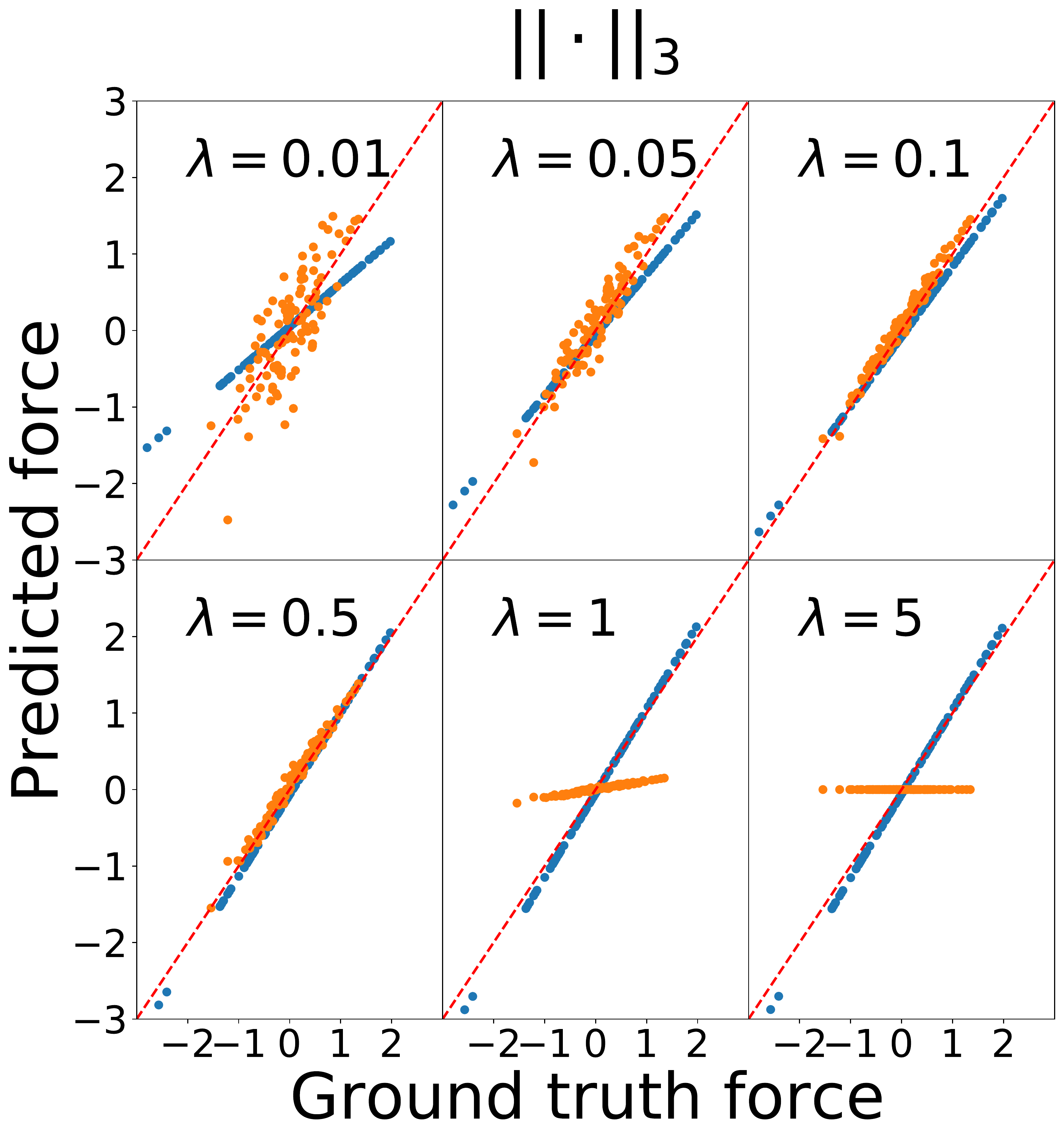}
		\caption{}
	\end{subfigure}
	\begin{subfigure}[b]{0.5\columnwidth}
		\includegraphics[trim=0cm 0.5cm 0cm 0cm,width=\textwidth]{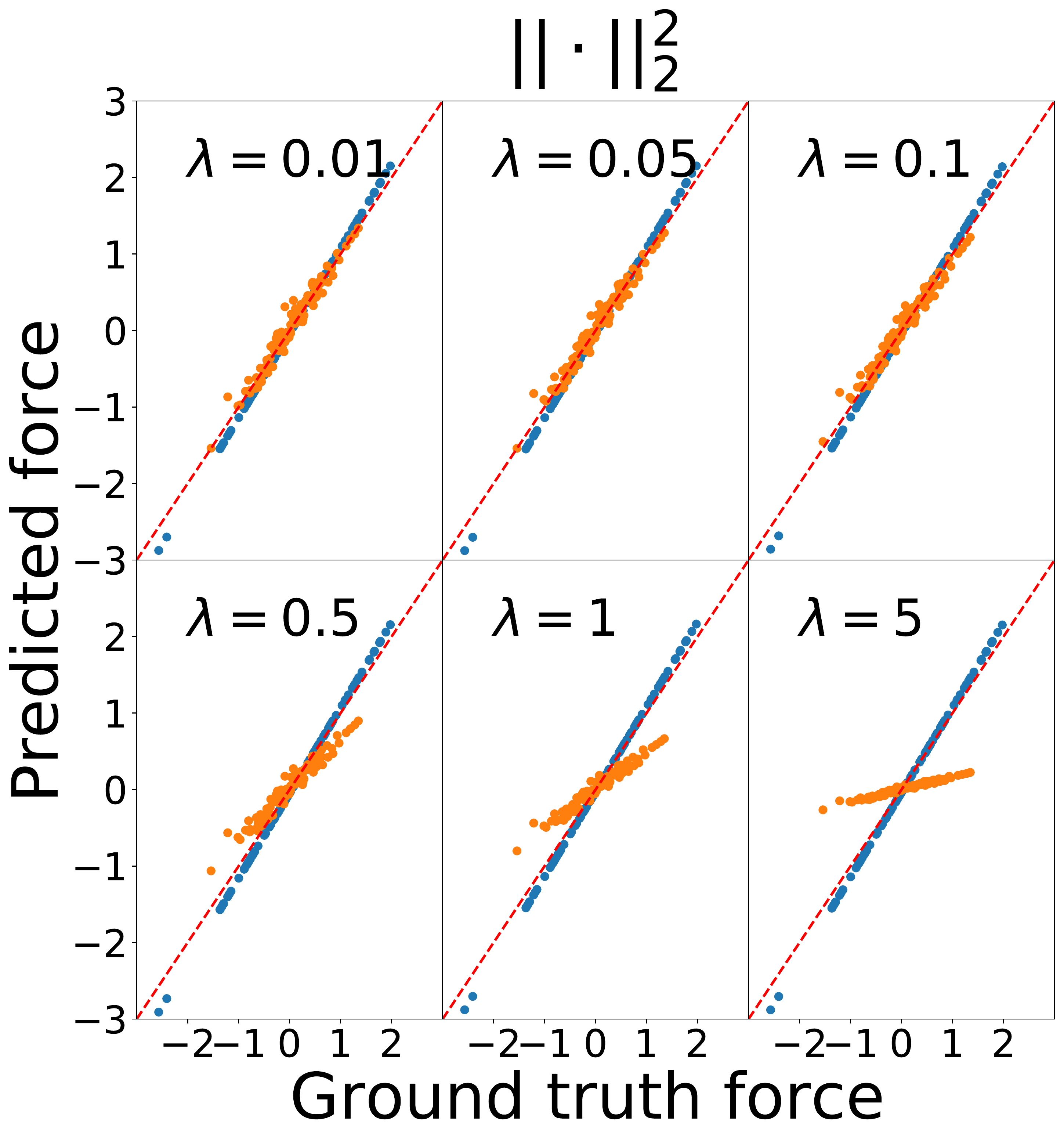}
		\caption{}
	\end{subfigure}
	\vskip -0.3cm
	\caption{NNPhD is seen to behave robustly for $0.05\lesssim\lambda<1$ and $p\geq 1$. We test NNPhD on five examples (the first two are conservative, and last three are non-conservative). (a)-(d) prediction error $L_e$ as a function of $\lambda$ with different norms as loss function: for (a)-(c) $\|\cdot\|_p(p=1,2,3)$, non-conservative dynamics has an error jump at $\lambda=1$, while conservative dynamics does not. In (d), mean squared loss leads to a smooth phase transition for non-conservative dynamics; (e)-(h)  for the linear damping case $\ddot{q}=-q-\frac{1}{2}\dot{q}$, we show how $f^{NN}_{\mathrm{c}}$ and $f^{NN}_{\mathrm{n}}$ are aligned with $f_{\mathrm{c}}^{PHY}$ and $f_{\mathrm{n}}^{PHY}$ for different loss functions and different $\lambda$.}
	\label{fig:decomposition}
	\vskip -0.3cm
\end{figure*}

\subsection{Theory Verification and Algorithm Benchmarking}\label{sec:exp_decomp_subsec}

\begin{table*}[ht]
	\caption{Examples of Conservative and Non-conservative Dynamics}
	\begin{center}
		\begin{tabular}{|c|c|c|c|}
			\hline
			Classes & Model & Equation &  Lagrangian \\\hline
			\multirow{3}{*}{\tabincell{c}{Conservative \\ ($f_{\mathrm{c}}^{PHY}$)}}&Harmonic Oscillator (HO)&$\ddot{q}=-q$ & $\lag=\dot{q}^2/2-q^2/2$\\\cline{2-4}
			&Magnetic Field (MF)& \tabincell{c}{$\ddot{q}_1=\dot{q}_2$ \\ $\ddot{q}_2=-\dot{q}_1$}&\tabincell{c}{$\lag=(\dot{q}_1-q_2)^2/2$ \\ $+(\dot{q}_2+q_1)^2/2$} \\\cline{2-4}
			&Constant Gravity (CG)&$\ddot{q}=-1$& $\lag=\dot{q}^2/2-q$\\
			\hline
			\multirow{3}{*}{\tabincell{c}{Non-Conservative \\ ($f_{\mathrm{n}}^{PHY}$)}}&Linear Damping (LD)&$\ddot{q}=-\dot{q}$&\multirow{3}{*}{NA}\\\cline{2-3}
			&Constant Damping (CD)&$\ddot{q}=-\mathrm{sgn}(\dot{q})$& \\\cline{2-3}
			&Periodic Force (PF)&$\ddot{q}=\mathrm{sin}(t)$&\\
			\hline
		\end{tabular}
	\end{center}
	\label{tab:lagrangian_example}
	\vskip -0.6cm
\end{table*}

In this section, to better understand its algorithmic behavior, we test NNPhD on the six simple dynamical systems in physics in Table \ref{tab:lagrangian_example}: conservative examples involve a harmonic oscillator (HO), a magnetic field (MF)\footnote{Note that we refer to the magnetic force as conservative because it conserves energy, even though physicists customarily limit that term to velocity-independent forces that can be written as the gradient of a potential function.} and constant gravity (CG) and non-conservative examples include linear damping (LD), constant damping (CD) and a periodic force (PF). We combine these into five examples to obtain two conservative systems (HO+MF, HO+CG) and three non-conservative systems (HO+LD, HO+CD, HO+PF), whose dynamical equations are summarized in Appendix \ref{app:equations}. For each system, we train NNPhD with the ADAM optimizer for 2,000 iterations, using batch size 32, learning rate schedule  $\{0.01,0.001,0.0001,0.00001\}$ and 500 iterations for each learning rate.

We now explore how the performance of the NNPhD depends on the regularization coefficient $\lambda$ and norm index $p$ by testing $\lambda=\{0.01,0.02,0.05,0.1,0.2,0.5,1,2,5,10,20,50,100\}$ and $p=1,2,3$. % $\mu$ is supported on the trajectory $(\q(t),\qd(t))\ (0\leq t\leq t_0)$, which is determined by dynamic $f$, initial condition, and simulation time $t_0$. 
Instead of simulating trajectories to generate data as in previous sections, we compute $\qd=f(\q,\qd,t)$ at $N$ random points $(\q^{(i)},\qd^{(i)},t_i)$. We first generate all positions, velocities and times as independent Gaussian random variables with zero mean and unit standard deviation, then explore more complicated coverage in Section \ref{sec:data_quality_exp}. We generate $10^3$ training samples and $10^3$ testing samples $(\q,\qd,t)$. 
\begin{figure*}[htbp]
	\centering
	\includegraphics[trim = 0cm 1cm 0cm 0cm, width=0.7\linewidth]{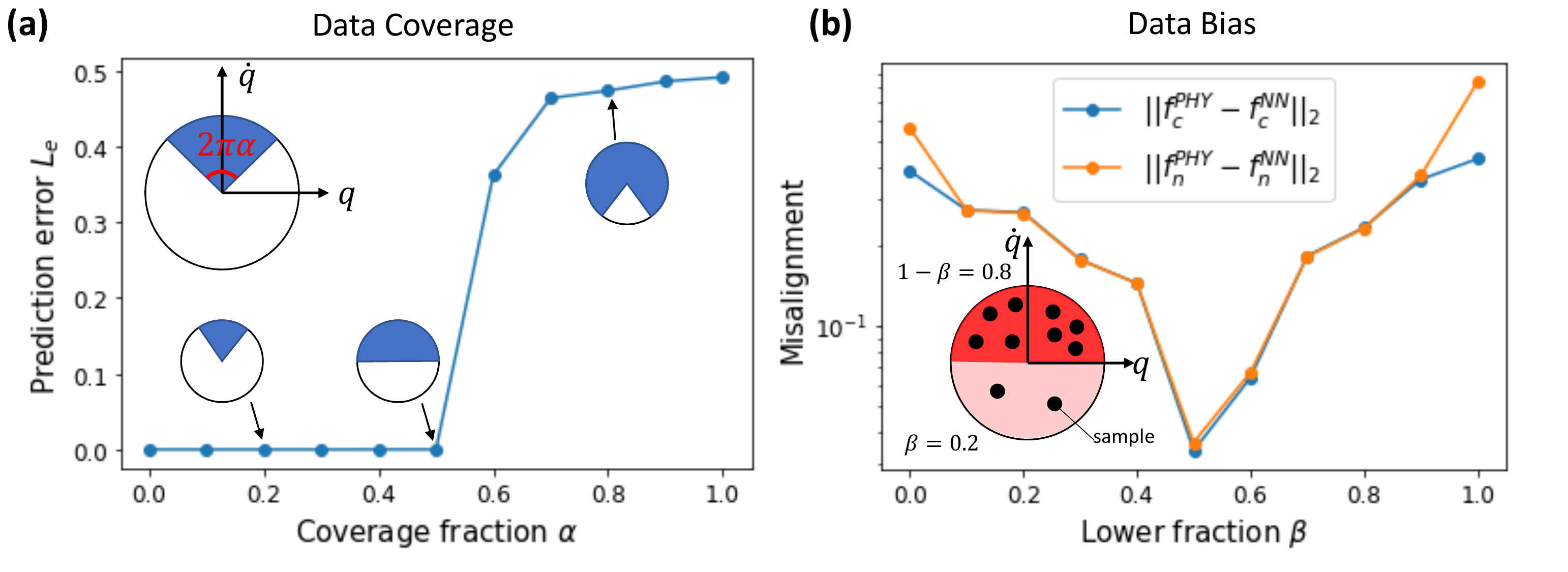}
	\caption{Dependence on data distribution parameters $\alpha$ and $\beta$. Low quality data might prevent new physics discovery via (a) incomplete data coverage and (b) biased data distribution.}
	\label{fig:dataquality}
\end{figure*}

{\bf How performance depends on $p$:} In Figure \ref{fig:decomposition}(a), we plot the dependence of the prediction error $L_e$ on $\lambda$ ($p=1$), again verifying the phase transition prediction from Section \ref{subsec:phase_transition}: The non-conservative systems (HO+LD, HO+CD, HO+PF) are seen to have a large error jump at $\lambda=1$ while, in contrast, {$L_e$ does not increase} at $\lambda=1$ for the conservative systems (HO+MF, HO+CG). In fact, HO+MF has even lower prediction error at larger $\lambda$, showing the advantage of employing a Lagrangian Neural Network as opposed to a black box for conservative systems. Figure \ref{fig:decomposition}(a) shows that NNPhD has the ability to distinguish between conservative and non-conservative dynamics by looking at prediction loss around $\lambda=1$, \ie, a sharp phase transition indicates non-conservative dynamics. The above observations also apply to Figure \ref{fig:decomposition}(b)(c) when $p=2$ and $p=3$. However Figure \ref{fig:decomposition}(d) shows that mean-squared-error loss (where the $L_2$-norm is squared) leads to a smooth transition, known as second-order phase transition in physics.

{\bf How performance depends on $\lambda$:} 
We then quantify how accurately the 
conservative and non-conservative components are modeled for different $\lambda$-values. 
Figure \ref{fig:decomposition}(e) shows our results for
the damped oscillator example $\ddot{q}=-q-\frac{1}{2}\dot{q}$, comparing $f_{\mathrm{c}}^{NN}$ with $f_{\mathrm{c}}^{PHY}=-q$ and $f_{\mathrm{n}}^{NN}$ with $f_{\mathrm{n}}^{PHY}=-\frac{1}{2}\dot{q}$. As Theorem \ref{thm:1} suggests, we observe that (1) when $\lambda>1$, $f^{NN}_{\mathrm{n}}$ predicts $0$ while $f^{NN}_{\mathrm{c}}\approx f_{\mathrm{c}}^{PHY}$; (2) when $0.05\lesssim\lambda<1$, $f^{NN}_{\mathrm{c}}\approx f^{PHY}_{\mathrm{c}}$ and $f^{NN}_{\mathrm{n}}\approx f^{PHY}_{\mathrm{n}}$; (3) when $\lambda\lesssim 0.05$, although in theory it behaves similarly to (2), a small $\lambda$ does not have much incentive to penalize the  black box, which therefore absorbs part of the conservative component. Figure \ref{fig:decomposition}(f)(g)(h) show that the alignments between the ground truth components and the the predictions from NNPhD are quite robust for different choices of loss function.

\subsection{Physics Discovery Requires High-Quality Data}\label{sec:data_quality_exp}

Although NNPhD does not assume any data distribution to achieve the decomposition, we will now see that NNPhD can only learn to accurately decompose the force into conservative and non-conservative parts if the 
data has high quality, specifically, if the data distribution has 
(1) adequate coverage of the state space $\x=(\q,\qd)$ and (2) is unbiased.

{\bf Incomplete data coverage:}
We now explore the situation where data points are only accessible in a pie-shaped subset of space covering an angular fraction of $\alpha\in [0,1]$, as illustrated in Figure \ref{fig:dataquality}(a). We consider the 1D constant damped oscillator $\ddot{q}=-q-\frac{1}{2}\mathrm{sgn}(\dot{q})$, train NNPhD with $\lambda=10$ on datasets with different fractions $\alpha$ and calculate the prediction loss $L_e$. Recall that when $\lambda=10$, a high prediction error $L_e$ is a sign of non-conservation. Figure \ref{fig:dataquality}(a) shows that when $\alpha\leq 0.5$, no samples are generated in the lower half plane (where $\dot{q}<0$), then the prediction error $L_e$ is nearly zero, revealing no sign of non-conservation. For $\alpha>0.5$, on the other hand, NNPhD has a large $L_e$, revealing the non-conservative nature of the damping force. This observation makes physical sense since, if only $\dot{q}>0$ samples are observed, the damping force acts as a constant conservative force (like gravity) which can be included as a $(-\frac{1}{2} q)$ term in a Lagrangian $\lag=\frac{1}{2}\dot{q}^2-\frac{1}{2}q^2-\frac{1}{2} q$, making the dynamics appear energy conserving.

{\bf Imbalanced data distribution} Even in the case when data is available everywhere in all relevant parts of phase space, the data set can still be imbalanced, \eg, contain more $\dot{q}>0$ samples than $\dot{q}<0$ ones. Figure \ref{fig:dataquality}(a) show that this is not a sever problem in the sense that it does not preclude us from identifying the \textit{existence} of non-conservative dynamics, since the presence of since merely a few samples with $\dot{q}<0$ suffices to give a clear signal of non-conservation. However, such imbalance may harm the accuracy of our decomposition. We consider the linear damped oscillator $\ddot{q}=-q-\frac{1}{2}\dot{q}$ where a fraction $\beta$ of the data is in the upper half plane while the remaining fraction $1-\beta$ is in the lower half-plane. We set $\lambda=0.5$, train on datasets with different $\beta$ and compare learned conservative and non-conservative force fields with ground truth. We found the learned functions $f_{\mathrm{c}}^{NN}$ and $f_{\mathrm{n}}^{NN}$ are not necessarily aligned with the ground truth decomposition  $f_{\mathrm{c}}^{PHY}=-q$ and $f_{\mathrm{n}}^{PHY}=-\frac{1}{2}\dot{q}$. 
We define misalignment as
\begin{equation}
    \begin{aligned}
      &m_{\mathrm{c}}=(\frac{1}{nN}\sum_{i=1}^N ||f_{\mathrm{c}}^{NN}(\x^{(i)})-f_{\mathrm{c}}^{PHY}(\x^{(i)})||^2)^{\frac{1}{2}},\\
      &m_{\mathrm{n}}=(\frac{1}{nN}\sum_{i=1}^N ||f_{\mathrm{n}}^{NN}(\x^{(i)},t^{(i)})-f_{\mathrm{n}}^{PHY}(\x^{(i)},t^{(i)})||^2)^{\frac{1}{2}}.
    \end{aligned}
\end{equation}
Figure \ref{fig:dataquality}(b)  shows this misalignment as a function of $\beta$, revealing a minimum with nearly zero  misalignment for the $\beta=0.5$ case when the data is balanced.
In summary, these last numerical experiments show that 
high-quality data is important for new physics discovery, regardless of whether the data is analyzed by intelligent human scientists or machine learning.

\section{Conclusion}\label{sec:conclusion}
We have presented the Neural New-physics Detector (NNPhD), a method for decomposing a general force field into components that do and do not conserve energy. We showed that NNPhD was able to do this robustly for a series of physical examples without access to symbolic equations, providing clear evidence of the existence of conservation-violating new physics. We also found that NNPhD could extrapolate time series more accurately than both LNN and black-box neural networks.
As ever-larger science and engineering datasets become available for dynamical systems, we hope that NNPhD will help enable more accurate prediction as well as aid discovery of interesting new phenomena.

{\bf Acknowledgements} We thank Yuanqi Du and Jieyu Zhang for helpful discussions, and the Center for Brains, Minds, and Machines (CBMM) for hospitality. This work was supported by the Casey and Family Foundation, the Foundational Questions Institute, the Rothberg Family Fund for Cognitive Science, and AI Institute for Artificial Intelligence and Fundamental Interactions (IAIFI) through NSF Grant No. PHY-2019786.

\bibliography{nnphd}

\clearpage

\onecolumngrid

\appendix

\section{Toy Example details}\label{app:equations}

For each dynamical system, the left hand side is $\qdd$, and right hand side is physical ground truth where conservative and non-conservative dynamics is explicitly separated as $\{f_{\mathrm{c}}^{PHY}(\q,\qd)\}+\{f_{\mathrm{n}}^{PHY}(\q,\qd,t)\}$.

{\bf HO+MF} ($k=B=1$):
\begin{equation}
	\begin{pmatrix}
		\ddot{x}\\
		\ddot{y}
	\end{pmatrix}=\{-k
	\begin{pmatrix}
		x\\
		y
	\end{pmatrix}+B
	\begin{pmatrix}
		\dot{y}\\
		-\dot{x}
	\end{pmatrix}\}+\{
	\begin{pmatrix}
		0\\
		0
	\end{pmatrix}
	\}
\end{equation}

{\bf HO+CG} ($k=g=1$)
\begin{equation}
	\ddot{x} = \{-kx-g\}+\{0\}
\end{equation}

{\bf HO+LD} ($k=1,\gamma=\frac{1}{2}$)
\begin{equation}
	\ddot{x} = \{-kx\}+\{-\gamma\dot{x}\}
\end{equation}

{\bf HO+CD} ($k=1,\gamma=\frac{1}{2}$)
\begin{equation}
	\ddot{x} = \{-kx\}+\{-\gamma \mathrm{sgn}(\dot{x})\}
\end{equation}

{\bf HO+PF} ($k=1,a=\frac{1}{2}$)
\begin{equation}
	\ddot{x} = \{-kx\}+\{a \mathrm{sin}(t)\}
\end{equation}

\section{Neural network training details}\label{app:training_details}

We parameterize both the our LNN (conservative) and our UAN (non-conservative) force models as non-weight-sharing fully connected feedforward neural networks with two hidden 200-neuron layers. The LNN uses a mixture of softplus and quadratic activation (see Appendix~\ref{app:training_trick} for details) and has Eq.~(\ref{eq:AL}) hard-coded right before outputting $f_{\mathrm{c}}^{NN}$, while the UAN uses LeakyReLU activation (with negative slope $\alpha=0.2$) and does not involve in any other inductive biases. 

We measure the performance of NNPhD for
\beq{lambdaEq}
\lambda\in\{.01,.02,.05,.1,.2,.5,1,2,5,10,20,50,100\}
\eeq
by first initializing and training NNPhD with $\lambda=0.01$ using the ADAM optimizer with 
learning rate $\{10^{-2},10^{-3},10^{-4},10^{-5}\}$ for 2000 steps (500 steps for each learning rate), and iteratively increasing $\lambda$ and train for 2000 steps for each new $\lambda$-value. 
The model parameters of LNN and UAN are updated simultaneously.

\section{Tricks to Boost LNN Training}\label{app:training_trick}

\begin{figure}
	\centering
	\includegraphics[width=0.4\linewidth]{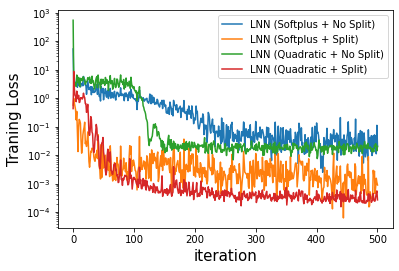}
	\caption{Tricks to boost LNN training}
	\label{fig:training_trick}
\end{figure}

As mentioned in \cite{cranmer2020lagrangian}, LNN is unstable and inefficient to train with traditional initializations in ML. As a result, expensive grid search of proper initializations is required. We propose two simpler tricks that have some improvements and are easy to implement. We use the example of a harmonic oscillator. The Lagrangian $\lag=\frac{1}{2}\dot{q}^2-\frac{1}{2}q^2$ contains only quadratic terms. We build a two hidden-layer networks with width 4-200-200-2.

{\bf Activation Trick}: \cite{cranmer2020lagrangian} uses Softplus as activation function, which is general but inefficient to represent a quadratic function. However the quadratic function is common and useful in physics, we propose to divide neurons into two groups, where one group uses Softplus as activation, and the other group uses quadractic function as activation.

{\bf Split Trick}:
One of instability when forwarding LNN comes from inversion of $\nabla_{\qd}\nabla_{\qd}\lag$. In physical terms, $\nabla_{\qd}\nabla_{\qd}\lag$ represents a mass scalar (matrix) which is positive (positive definite). However this constraint is not explicitly embedded to LNN, leading to training instabilities. We split $\lag$ into two parts:
\begin{equation}
	\lag=\lag_1+\lag_2=\lag_{NN}+\frac{1}{2}a\qd^T\qd
\end{equation}
where $\lag_1$ is learned by LNN, while $\lag_2$ is a fixed quadratic term (we choose $a=1$). At initializations when $\lag_{NN}\approx 0$, $\lag\approx \frac{1}{2}a\qd^T\qd$ is positive definite.

To test how the proposed two tricks operate, we implement four models in Figure \ref{fig:training_trick} to fit 1D harmonic oscillator: Softplus or quadratic activation, and w/wo the split trick. The best performance one is the LNN using quadratic activation and the split trick.

\section{Lagrangian mechanics for machine learning readers}\label{app:ml_lag}

For readers whose background is primarily in machine learning rather than physics, this section provides a brief review of the Lagrangian mechanics formalism that we use in this paper.

\textit{Conservative dynamics} describes a dynamic where there exist conserved quantities (energy, momentum, angular momentum etc). Conservation laws are important in physics, as it corresponds to symmetries of our mother nature, according to Noether's theorem~\cite{hanc2004symmetries}. In particular, energy conservation is equivalent to time translational symmetry. To describe dynamics that conserves energy, physicists employ (time-independent) \textit{Lagrangian} or \textit{Hamiltonian} formulation. Since our work and prior work Lagrangian Neural Network (LNN)~\cite{cranmer2020lagrangian} are based on Lagrangian mechanics, we provide a brief introduction here.

The Lagrangian formalism models a classical physics system with trajectory $\x(t)=(\q,\qd)$ that begins in one state $\x(t_0)$ and ends up in another state $\x(t_1) (t_1>t_0)$, where $\q$ and $\qd$ are called the generalized coordinates and velocities respectively. There are many paths that these states might take as they pass from $\x(t_0)$ to $\x(t_1)$, and Lagrangian mechanics tells that there is only one path that the physical system will take, i.e., the path that minimizes $\int_{t_0}^{t_1}(T(\q(t),\qd(t))-V(\q(t),\qd(t)))dt$, where $T$ is kinetic energy and $V$ is the potential energy. The term $\lag(\q,\qd)\equiv T(\q,\qd)-V(\q,\qd)$ is called Lagrangian and the path (trajectory) of the system is determined by \textit{Euler-Lagrange equation}:
	$$\frac{d}{dt}\nabla_{\qd} \lag=\nabla_{\q}\lag.$$

Based on the formulas in Lagrangian Neural Network (LNN)~\cite{cranmer2020lagrangian}, Euler-Lagrange equation $\frac{d}{dt}\nabla_{\qd} \lag=\nabla_{\q}\lag$ can be rewritten by {applying} a chain rule $\frac{d}{dt}\nabla_{\qd}^T\lag=(\nabla_{\qd}\nabla_{\qd}^T\lag)\qdd +(\nabla_{\q}\nabla_{\qd}\lag)\qd$ resulting in:
\begin{equation}
	\qdd = (\nabla_{\qd} \nabla_{\qd}^T\lag)^{-1}(\nabla_{\q}\lag-(\nabla_{\q}\nabla_{\qd}^T\lag)\qd)
\end{equation}

One inductive bias brought by Lagrangian mechanics is that Eq.~(\ref{eq:AL}) describes \textbf{conservative physical dynamics}. That is, the energy function defined as
\begin{equation}
\label{eq: definition_H}
	H(\q,\qd) = \nabla_{\qd}^T\lag(\q,\qd)\qd-\lag(\q,\qd)
\end{equation}
is constant along a trajectory $(\q(t),\qd(t))$ driven by Eq.~(\ref{eq:AL}). The proof of $H(\q,\qd)$ conservation can be found in standard physics textbooks~\cite{goldstein2002classical} and is included here for completeness.

\begin{lemma}\label{lemma:lag2conserve}
	Given a Lagrangian $\lag(\q,\qd)$, the energy defined in Eq.~(3) is conserved along the trajectory $(\q(t),\qd(t))$ driven by Eq.~(2).
\end{lemma}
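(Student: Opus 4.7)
The plan is to show $\frac{d}{dt}H(\q(t),\qd(t)) = 0$ along any trajectory satisfying the Euler--Lagrange equation~(\ref{EulerLagrangeEq}). Since $H$ is an explicit function of $(\q,\qd)$, the strategy is a direct chain-rule calculation: differentiate $H = \nabla_{\qd}^T\lag\cdot\qd - \lag$ in time and then invoke the equation of motion. The expression for $H$ is engineered precisely so that the $\qdd$-contributions cancel, leaving behind exactly the Euler--Lagrange residual contracted with $\qd$.

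Concretely, the first step is to write
\[
\frac{dH}{dt} = \left(\frac{d}{dt}\nabla_{\qd}^T\lag\right)\qd + \nabla_{\qd}^T\lag\cdot\qdd - \frac{d\lag}{dt},
\]
and expand the last term by the chain rule as $\frac{d\lag}{dt} = \nabla_{\q}^T\lag\cdot\qd + \nabla_{\qd}^T\lag\cdot\qdd$. The two copies of $\nabla_{\qd}^T\lag\cdot\qdd$ then cancel, leaving
\[
\frac{dH}{dt} = \left(\frac{d}{dt}\nabla_{\qd}^T\lag - \nabla_{\q}^T\lag\right)\qd,
\]
which is identically zero along any Euler--Lagrange trajectory by~(\ref{EulerLagrangeEq}). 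Hence $H$ is constant in time, which is precisely the claim.

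There is no substantive obstacle here, since the cancellation is exact and purely algebraic; the ``hard part'' is really just bookkeeping of row/column conventions so that the inner products $\nabla_{\qd}^T\lag\cdot\qd$ and $\nabla_{\qd}^T\lag\cdot\qdd$ are unambiguous scalars. The only regularity assumption required is that $\lag\in C^2$ on the relevant domain, so that $\frac{d}{dt}\nabla_{\qd}\lag$ is well defined along the solution curve; this also ensures that the compact form~(\ref{EulerLagrangeEq}) and the explicit form~(\ref{eq:AL}) used elsewhere in the paper are interchangeable whenever the mass matrix $\nabla_{\qd}\nabla_{\qd}^T\lag$ is invertible along the trajectory. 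Both conditions are standard in Lagrangian mechanics and implicit in the hypotheses of the lemma.
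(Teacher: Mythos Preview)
Your proof is correct and follows essentially the same approach as the paper: a direct chain-rule computation showing that the $\qdd$-terms cancel and the remainder is exactly the Euler--Lagrange residual contracted with $\qd$. The only cosmetic difference is that the paper first computes $\frac{d\lag}{dt}$ and then recognizes it as $\frac{d}{dt}(\qd^T\nabla_{\qd}\lag)$, whereas you differentiate $H$ directly; the underlying algebra is identical.
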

\begin{proof}
	Invoke the chain rule one obtains the time derivative of $\lag$:
	\begin{equation}
		\frac{d\lag}{dt}=\qd^T\nabla_{\q}\lag+\qdd^T\nabla_{\qd}\lag
	\end{equation}
	Eq.~(4) is equivalent to Euler-Lagrangian equation $\frac{d}{dt}\nabla_{\qd}\lag=\nabla_{\q}\lag$, so we replace $\nabla_{\q}\lag$ with $\frac{d}{dt}\nabla_{\qd}\lag$:
	\begin{equation}
		\frac{d\lag}{dt}=\qd^T\frac{d}{dt}\nabla_{\qd}\lag+\qdd^T\nabla_{\qd}\lag=\frac{d}{dt}(\qd^T\nabla_{\qd}\lag)\longrightarrow \frac{dH}{dt}\equiv \frac{d}{dt}(\qd^T\nabla_{\qd}\lag-\lag)=0
	\end{equation}
\end{proof}

Since not all physical system conserves energy, Lagrangian mechanics is insufficient to describe non-conservative dynamics, motivating the design of NNPhD framework. We prove that linear damp example is non-conservative, i.e., it cannot be represented by Lagrangian mechanics.
\begin{lemma}
	Let function $f:\mathbb{R}^2\rightarrow\mathbb{R}$ be defined as $f(\q,\qd)=c\qd$, where $c$ can be any real non-zero constant. Then, $f$ cannot be represented by Eq.~(\ref{eq:AL}) for any Lagrangian $\lag\in D^2(\q,\qd)$ ($D^2(\q,\qd)$ is the function space consisting of all twice-differentiable functions with respect to $(\q, \qd)$).
\end{lemma}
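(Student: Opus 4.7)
The plan is to proceed by contradiction. Assume some $\lag(\q,\qd)\in D^2$ reproduces $\qdd=c\qd$ via \eq{eq:AL}; this in particular demands that $\lag_{\qd\qd}$ be nonzero (the 1D analogue of invertible) on the whole phase space, or else \eq{eq:AL} is not well posed. Under this assumption, Lemma~\ref{lemma:lag2conserve} forces the energy $H(\q,\qd)=\qd\,\lag_{\qd}-\lag$ to be constant along every solution. The strategy is to show that this conservation constraint, when combined with the prescribed dynamics, is so rigid that it forces $\lag_{\qd\qd}\equiv 0$.

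First I would pin down the functional form of $H$. Differentiating the conservation law along a trajectory and inserting $\qdd=c\qd$ yields $\qd(H_{\q}+cH_{\qd})=0$, which must hold pointwise on $\mathbb{R}^2$ because every state is the initial condition of some solution to $\qdd=c\qd$. Dividing out $\qd$ on the open set $\qd\neq 0$ (and extending by continuity) produces the first-order linear PDE $H_{\q}+cH_{\qd}=0$, whose method-of-characteristics solution is $H(\q,\qd)=h(\q-\qd/c)$ for some single-variable function $h$.

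Next I would convert this representation into a contradiction by probing $\qd=0$. Substituting it into the definition of $H$ gives $\qd\,\lag_{\qd}-\lag=h(\q-\qd/c)$, and differentiating once with respect to $\qd$ collapses the left-hand side to $\qd\,\lag_{\qd\qd}$, producing $\qd\,\lag_{\qd\qd}=-h'(\q-\qd/c)/c$. Evaluating at $\qd=0$ forces $h'(\q)=0$ for every $\q$, so $h$ is constant; the identity then reduces to $\qd\,\lag_{\qd\qd}\equiv 0$ on $\mathbb{R}^2$. Hence $\lag_{\qd\qd}=0$ on the dense set $\qd\neq 0$, contradicting the nonvanishing of $\lag_{\qd\qd}$ that was needed to make \eq{eq:AL} well posed, and the proof is complete.

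The main obstacle I expect is the passage from the conservation law along each individual trajectory to the pointwise PDE on all of $\mathbb{R}^2$: one has to observe that every point of phase space really is the initial condition of some solution (which is automatic since $\qdd=c\qd$ is globally well-posed) before safely cancelling the factor of $\qd$, and the cancellation is valid only on the open half-planes $\qd\neq 0$, with a continuity argument to reach $\qd=0$. Once the representation $H=h(\q-\qd/c)$ is secured, the remainder is a one-line differentiation followed by an evaluation at a single point, so the heart of the argument is really in establishing this structural description of $H$.
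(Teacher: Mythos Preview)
Your proposal is correct and follows essentially the same route as the paper: derive the first-order PDE $H_\q+cH_{\qd}=0$, solve by characteristics to get $H$ as a function of a single combination $\qd-c\q$ (equivalently $\q-\qd/c$), evaluate at $\qd=0$ to force that function to be constant, and conclude a contradiction. The only cosmetic differences are that the paper obtains the PDE directly by multiplying \eq{eq:AL} through by $\lag_{\qd\qd}$ (so the factor of $\qd$ you divide out never appears), and it phrases the final contradiction as $c\qd=0$ rather than $\lag_{\qd\qd}=0$; your differentiation-in-$\qd$ finish is if anything slightly cleaner than the paper's limit computation.
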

\begin{proof}
	We prove the claim by reduction to absurdity. Suppose there exists a Lagrangian $\lag\in D^2(\q,\qd) $, such that, 
	\begin{equation}
		\label{eq:representaition_qd}
		c\qd=\left(\frac{\partial^2 \lag}{\partial \qd^2}(\q,\qd)\right)^{-1}\left(\frac{\partial \lag}{\partial \q}(\q,\qd)-\left(\frac{\partial^2 \lag}{\partial \qd\partial \q}(\q,\qd)\right) \qd\right).
	\end{equation}
	\\By multiplying $\left(\frac{\partial^2 \lag}{\partial \qd^2}(\q,\qd)\right)$ to both sides of eq. (\ref{eq:representaition_qd}), we have 
	\begin{equation*}
		c\left(\frac{\partial^2 \lag}{\partial \qd^2}(\q,\qd)\right)\qd=\frac{\partial \lag}{\partial \q}(\q,\qd)-\left(\frac{\partial^2 \lag}{\partial \qd\partial \q}(\q,\qd)\right) \qd,
	\end{equation*}
	which by eq. (\ref{eq: definition_H}) further leads to 
	\begin{equation}
		\label{eq:relationship_q_qd_gradient}
		c\frac{\partial H }{\partial \qd}(\q,\qd)+\frac{\partial H }{\partial \q}(\q,\qd)=0.
	\end{equation}
	\\By variable substitution, let $H(\q,\qd)=g(c\q+\qd,\qd-c\q)$. By eq. (\ref{eq:relationship_q_qd_gradient}), 
	\begin{align*}
	    \frac{\partial g(c\q+\qd,\qd-c\q)}{\partial (c\q+\qd)}= &\frac{\partial H(\q,\qd)}{\partial \q}\frac{\partial \q}{\partial (c\q+\qd)}+\frac{\partial H(\q,\qd)}{\partial \qd}\frac{\partial\qd}{\partial (c\q+\qd)}
	    \\
	    =&\frac{1}{2c}\frac{\partial H(\q,\qd)}{\partial \q}+\frac{1}{2}\frac{\partial H(\q,\qd)}{\partial \qd}=0.
	\end{align*}
	Therefore, $H(\q,\qd)$ is invariant of $c\q+\qd$ and only relies on the value of $\qd-c\q$. Thus, we can further abbreviate $H(\q,\qd)$ as $g(\qd-c\q)$.
	\\
	On the other hand, by eq. (\ref{eq: definition_H}), 
	\begin{equation*}
	    g(-c\q)=g(0-c\q)=H(\q,\qd)=-\lag(\q,0).
	\end{equation*}
	Therefore,
	\begin{align*}
		\left.\frac{\partial \lag(\q,\qd)}{\partial \qd}\right|_{\qd=0}&=\lim_{\qd\rightarrow 0} \frac{\lag(\q,\qd)-\lag(\q,0)}{\qd }
		\\ 
		&=\lim_{\qd\rightarrow 0} \frac{\qd \frac{\partial \lag(\q,\qd)}{\partial\qd}(\q,\qd)-g(\qd-c\q)+g(-c\q)}{\qd}
		\\
		&\overset{(*)}{=}\left.\frac{\partial \lag(\q,\qd)}{\partial \qd}\right|_{\qd=0}- g'(-c\q),
	\end{align*}
	where eq. $(*)$ is due to that $ \frac{\partial \lag(\q,\qd)}{\partial\qd}(\q,\qd)$ is differentiable (thus continuous).
	\\
	Therefore, we have $g'(\q)=0$ for any $\q$, which further leads to $H(\q,\qd)$ is a constant function and 
	\begin{equation*}
		c\qd=-\left(\frac{\partial^2 \lag}{\partial \qd^2}(\q,\qd)\right)^{-1} \frac{\partial H(\q,\qd)}{\partial \q}=0.
	\end{equation*}
	\\
	The proof is completed since $c\ne 0$.
\end{proof}

\section{Learning perspectives of Section \ref{sec:problem_formulation}}\label{sec:learning_perspective}

We describe the learning task based on the force decomposition in Section \ref{sec:problem_formulation}. Given samples $(\q^{(i)},\qd^{(i)},t^{(i)};\qdd^{(i)}), i=1,\cdots, N$ that are uniformly drawn from the trajectories of dynamic $\qdd=f(\q,\qd,t)$ with $t\in[0, T]$ or a given distribution $\mu$, we aim to learn both $f_{\mathrm{c}}$ and $f_{\mathrm{n}}$ from data. Because the ground-truth dynamic and its vector space are unknown, we need to select a model space $(\mathcal{G},\|\cdot\|_{\mathcal{G}})$ which is also a normed vector space to find the best model in it. For this learning problem, we learn the model pair $(f_{\mathrm{c}}^{NN},f_{\mathrm{n}}^{NN})$ simultaneously by solving the following constrained minimization problem
{\small\begin{align}
		& \min_{(f_{\mathrm{n}}^{NN},f_{\mathrm{c}}^{NN})}\mathcal{L}_S(f_{\mathrm{n}}^{NN},f_{\mathrm{c}}^{NN})=
		\frac{1}{N}\sum_{i=1}^N\|f_{\mathrm{c}}^{NN}(\q^{(i)},\qd^{(i)})+f_{\mathrm{n}}^{NN}(\q^{(i)},\qd^{(i)},t^{(i)})-\qdd^{(i)}\|_{\mathcal{G}}
		\nonumber\\
		&\quad\textit{s.t.}\quad f_{\mathrm{c}}^{NN}=\arg\min_{g\in\mathcal{G}_c}\frac{1}{N}\sum_{i=1}^N\|\qdd^{(i)}-g(\q^{(i)},\qd^{(i)})\|_{\mathcal{G}}\nonumber
\end{align}}

We make the following discussions on the learning task:
We denote the optimal models of the above optimization problem as $(f_{\mathrm{c}}^{NN*}, f_{\mathrm{n}}^{NN*})$. The interpolating prediction ability (which is measured by the gap between $\int_{0}^T\|f_{\mathrm{c}}^{NN*}+f_{\mathrm{n}}^{NN*}-\qdd\|dt$ and $\mathcal{L}_S(f_{\mathrm{c}}^{NN*},f_{\mathrm{n}}^{NN*})$) is determined by the approximation ability of $\mathcal{G}$ and the number of training data. As the number of training data $N$ increases, the gap will be smaller. As the approximation ability of $\mathcal{G}$ becomes stronger, the gap will be smaller.

\section{Theorem 1 (formal)}\label{sec:Theorem1_formal}

\begin{theorem}\label{thm:1}
	We suppose the ground-truth hypothesis space $(\mathcal{G},\|\cdot\|_p)$,
	Let $f_{\mathrm{c}}^{NN}(\q,\qd;\boldsymbol{w}_{\mathrm{c}})$ be the Lagrangian Neural Network with parameters $\boldsymbol{w}_{\mathrm{c}}$ in NNPhD
	framework, and $f_{\mathrm{n}}^{NN}(\q,\qd,t;\boldsymbol{w}_{\mathrm{n}})$ be the black box neural network with parameters $\boldsymbol{w}_{\mathrm{n}}$ in NNPhD framework. Assume the black box neural network can interpolate every continuous function of $(\q,\qd,t)$ at any $N$ points, i.e., for any dataset $\{(\q^{(i)},\qd^{(i)},t^{(i)})\}_{i=1}^N$ with distinguished elements, $\{\{f_{\mathrm{n}}^{NN}(\q^{(i)},\qd^{(i)},t^{(i)};\boldsymbol{w}_{\mathrm{n}})\}_{i=1}^N: \boldsymbol{w}_{\mathrm{n}}\in \mathbb{R}^{d_{\boldsymbol{w}_{\mathrm{n}}}}\}=\mathcal{R}^N$, where $d_{\boldsymbol{w}_{\mathrm{n}}}$ is the dimension of $\boldsymbol{w}_{\mathrm{n}}$. Then, given any continuous function $f$ and any norm $\Vert \cdot\Vert$ on function space $\mathcal{C}(\q,\qd,t)$, the following claim stands:
	\\
	(1) For $\lambda>1$, optimizing $L_{NNPhD}$ is equivalent to optimize $\Vert f_{\mathrm{c}}^{NN}(\cdot,\cdot;\boldsymbol{w}_{\mathrm{c}})-\qdd \Vert$
	while keeping $f_{\mathrm{n}}^{NN} (\cdot,\cdot,\cdot; \boldsymbol{w}_{\mathrm{n}})$ as zero, that is, 
	{\footnotesize\begin{align*}
		&\arg\min_{\boldsymbol{w}_{\mathrm{c}},\boldsymbol{w}_{\mathrm{n}}} \left(\frac{1}{N}\sum_{i=1}^N\|f_{\mathrm{c}}^{NN}(\q^{(i)},\qd^{(i)};\boldsymbol{w}_{\mathrm{c}})
		\right.
		+\left.f_{\mathrm{n}}^{NN}(\q^{(i)},\qd^{(i)},t^{(i)};\boldsymbol{w}_{\mathrm{n}})-\qdd^{(i)}\|^p\right)^{\frac{1}{p}} 
		+\lambda \left(\frac{1}{N}\Vert f_{\mathrm{n}}^{NN}(\q^{(i)},\qd^{(i)},t^{(i)};\boldsymbol{w}_{\mathrm{n}})\Vert^{p}\right)^{\frac{1}{p}}
		\\
		=&\left\{(\boldsymbol{w}_{\mathrm{c}},\boldsymbol{w}_{\mathrm{n}}):\boldsymbol{w}_{\mathrm{c}}\in \arg\min_{\boldsymbol{w}_{\mathrm{c}}}
		\quad\left(\frac{1}{N}\sum_{i=1}^N\|f_{\mathrm{c}}^{NN}(\q^{(i)},\qd^{(i)};\boldsymbol{w}_{\mathrm{c}})
		-\qdd^{(i)}\|^p\right)^{\frac{1}{p}} ,\quad f_{\mathrm{n}}^{NN}(\q^{(i)},\qd^{(i)},t^{(i)};\boldsymbol{w}_{\mathrm{n}})=0\right\}.  
	\end{align*}}
	\\
	(2) 
	For $0<\lambda<1$, optimizing $L_{NNPhD}$ is also equivalent to optimize $\Vert f_{\mathrm{c}}^{NN}(\cdot,\cdot;\boldsymbol{w}_{\mathrm{c}})-\qdd \Vert$
	while keeping $f_{\mathrm{n}}^{NN} (\cdot,\cdot,\cdot; \boldsymbol{w}_{\mathrm{n}})$ as $f-f_{\mathrm{c}}^{NN}(\cdot,\cdot;\boldsymbol{w}_{\mathrm{c}})$, that is, 
	{\footnotesize\begin{align*}
		&\arg\min_{\boldsymbol{w}_{\mathrm{c}},\boldsymbol{w}_{\mathrm{n}}} \left(\frac{1}{N}\sum_{i=1}^N\|f_{\mathrm{c}}^{NN}(\q^{(i)},\qd^{(i)};\boldsymbol{w}_{\mathrm{c}})
		\right.
		+\left.f_{\mathrm{n}}^{NN}(\q^{(i)},\qd^{(i)},t^{(i)};\boldsymbol{w}_{\mathrm{n}})-\qdd^{(i)}\|^p\right)^{\frac{1}{p}} 
		+\lambda \left(\frac{1}{N}\Vert f_{\mathrm{n}}^{NN}(\q^{(i)},\qd^{(i)},t^{(i)};\boldsymbol{w}_{\mathrm{n}})\Vert^{p}\right)^{\frac{1}{p}}
		\\
		=&\left\{(\boldsymbol{w}_{\mathrm{c}},\boldsymbol{w}_{\mathrm{n}}):\boldsymbol{w}_{\mathrm{c}}\in \arg\min_{\boldsymbol{w}_{\mathrm{c}}}
		\quad\left(\frac{1}{N}\sum_{i=1}^N\|f_{\mathrm{c}}^{NN}(\q^{(i)},\qd^{(i)};\boldsymbol{w}_{\mathrm{c}})
		-\qdd^{(i)}\|^p\right)^{\frac{1}{p}} ,\quad f_{\mathrm{n}}^{NN}(\q^{(i)},\qd^{(i)},t^{(i)};\boldsymbol{w}_{\mathrm{n}})=f(\q^{(i)},\qd^{(i)},t^{(i)})-f_{\mathrm{c}}^{NN}(\q^{(i)},\qd^{(i)};\boldsymbol{w}_{\mathrm{c}})\right\}.
	\end{align*}}
\end{theorem}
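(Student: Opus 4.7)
I would attack both claims in parallel by decoupling the minimisation: first fix $\boldsymbol{w}_{\mathrm{c}}$ and use the universal-interpolation hypothesis on $f_{\mathrm{n}}^{NN}$ to turn the inner minimisation over $\boldsymbol{w}_{\mathrm{n}}$ into an unconstrained finite-dimensional problem in the sample values $h^{(i)} \equiv f_{\mathrm{n}}^{NN}(\q^{(i)},\qd^{(i)},t^{(i)};\boldsymbol{w}_{\mathrm{n}})$; then, after optimising these out in closed form, recognise that the remaining optimisation in $\boldsymbol{w}_{\mathrm{c}}$ is exactly the best-conservative-approximation problem underlying the minimal-non-conservation decomposition. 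Concretely, denote the per-sample residuals $g^{(i)} \equiv \qdd^{(i)} - f_{\mathrm{c}}^{NN}(\q^{(i)},\qd^{(i)};\boldsymbol{w}_{\mathrm{c}})$, stack them into vectors $\mathbf{h},\mathbf{g}\in\mathbb{R}^{nN}$, and work with the natural norm $\|\mathbf{u}\|_p \equiv \bigl(\tfrac{1}{nN}\sum_i \|u^{(i)}\|^p\bigr)^{1/p}$ (a genuine norm by Minkowski); the loss then reads $a + \lambda b$ with $a \equiv \|\mathbf{h}-\mathbf{g}\|_p$ and $b \equiv \|\mathbf{h}\|_p$.

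The key analytic step is the triangle inequality $\|\mathbf{g}\|_p \le a + b$, which I would exploit by rewriting the objective in two different ways according to the sign of $\lambda - 1$. For $\lambda > 1$ the identity $a + \lambda b = (a+b) + (\lambda-1)b$ gives $a + \lambda b \ge \|\mathbf{g}\|_p$, with equality forcing simultaneously $b=0$ and $a+b=\|\mathbf{g}\|_p$, hence $\mathbf{h} = \mathbf{0}$. For $0 < \lambda < 1$ the identity $a + \lambda b = \lambda(a+b) + (1-\lambda)a$ gives $a + \lambda b \ge \lambda\|\mathbf{g}\|_p$, with equality forcing $a=0$ and $a+b=\|\mathbf{g}\|_p$, hence $\mathbf{h} = \mathbf{g}$. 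Both optima are realisable by the UAN, so they are attained. Substituting them back, the minimised inner loss equals a positive constant ($1$ for $\lambda>1$, $\lambda$ for $0<\lambda<1$) times $\|\mathbf{g}\|_p$, so the outer minimisation over $\boldsymbol{w}_{\mathrm{c}}$ reduces in both regimes to minimising $\bigl(\tfrac{1}{nN}\sum_i\|\qdd^{(i)} - f_{\mathrm{c}}^{NN}(\q^{(i)},\qd^{(i)};\boldsymbol{w}_{\mathrm{c}})\|^p\bigr)^{1/p}$, which is precisely the empirical best-conservative-approximation problem; this yields both claimed set equalities.

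The hard part will be the uniqueness bookkeeping at the equality case. The inequality $a+b \ge \|\mathbf{g}\|_p$ admits many equality cases on its own (any $\mathbf{h}$ lying on the ``Minkowski segment'' from $\mathbf{0}$ to $\mathbf{g}$), so it is the combination with the auxiliary condition $b=0$ or $a=0$ that actually pins down $\mathbf{h}$. At the transition value $\lambda = 1$ both rewritings degenerate to $a + b$, the auxiliary condition vanishes, and the entire segment becomes optimal, which is exactly the mechanism responsible for the sharp phase transition that gives the theorem its name; I should also check the degenerate case $\mathbf{g} = \mathbf{0}$ (LNN already explains the data on the sample), in which $\mathbf{h} = \mathbf{0}$ is the unique minimiser for every $\lambda > 0$ and the two claimed descriptions coincide.
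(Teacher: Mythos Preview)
Your plan is correct and essentially matches the paper's own proof: both arguments hinge on the same two algebraic splits $a+\lambda b=(a+b)+(\lambda-1)b$ for $\lambda>1$ and $a+\lambda b=\lambda(a+b)+(1-\lambda)a$ for $0<\lambda<1$, combined with the triangle inequality $a+b\ge\|\mathbf g\|_p$, and then an outer minimisation over $\boldsymbol{w}_{\mathrm c}$. The only cosmetic difference is that the paper first proves an abstract-norm version (its Theorem~2) and specialises, whereas you invoke the interpolation hypothesis up front to pass to the finite-dimensional vector $\mathbf h$; your bookkeeping on the equality cases and the degenerate situation $\mathbf g=\mathbf 0$ is also a bit more explicit than the paper's.
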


We will provide proof of Theorem \ref{thm:1}. We will actually show our results hold for general norms which include the discrete norm we use in Theorem \ref{thm:1}.  Concretely, Theorem \ref{thm:1} holds as a special case as the following theorem:
\begin{theorem}[Theorem \ref{thm:1}, extended to general norms]\label{thm:2}
	Let $f_{\mathrm{c}}^{NN}(\q,\qd;\boldsymbol{w}_{\mathrm{c}})$ be the Lagrangian Neural Network with parameters $\boldsymbol{w}_{\mathrm{c}}$ in NNPhD
	framework, and $f_{\mathrm{n}}^{NN}(\q,\qd,t;\boldsymbol{w}_{\mathrm{n}})$ be the black box neural network with parameters $\boldsymbol{w}_{\mathrm{n}}$ in NNPhD framework. Assume the black box neural network can represent every continuous function of $(\q,\qd,t)$ under the norm $\Vert \cdot\Vert$, i.e., $\{g(\q,\qd,t):\exists \boldsymbol{w}_{\mathrm{n}},\text{ }\Vert g(\q,\qd,t)- f_{\mathrm{n}}^{NN}(\q,\qd;\boldsymbol{w}_{\mathrm{n}})\Vert=0\}=\mathcal{C}(\q,\qd,t)$. Then, given any continuous function $f$, the following claim stands:
	\\
	(1) For $\lambda>1$, 
	\begin{align*}
		&\arg\min_{\boldsymbol{w}_{\mathrm{c}},\boldsymbol{w}_{\mathrm{n}}} \Vert f(\q,\qd,t)-f_{\mathrm{c}}^{NN}(\q,\qd;\boldsymbol{w}_{\mathrm{c}})
		-f_{\mathrm{n}}^{NN}(\q,\qd,t;\boldsymbol{w}_{\mathrm{n}})\Vert +\lambda \Vert f_{\mathrm{n}}^{NN}(\q,\qd,t;\boldsymbol{w}_{\mathrm{n}})\Vert
		\\
		=&\left\{(\boldsymbol{w}_{\mathrm{c}},\boldsymbol{w}_{\mathrm{n}}):\boldsymbol{w}_{\mathrm{c}}\in \arg\min_{\boldsymbol{w}_{\mathrm{c}}} \Vert f(\q,\qd,t)-f_{\mathrm{c}}^{NN}(\q,\qd;\boldsymbol{w}_{\mathrm{c}})\Vert,
		\quad \Vert f_{\mathrm{n}}^{NN}(\q,\qd,t;\boldsymbol{w}_{\mathrm{n}})\Vert=0\right\}.  
	\end{align*}
	(2) For $0<\lambda<1$,
	\begin{align*}
		&\arg\min_{\boldsymbol{w}_{\mathrm{c}},\boldsymbol{w}_{\mathrm{n}}} \Vert f(\q,\qd,t)-f_{\mathrm{c}}^{NN}(\q,\qd;\boldsymbol{w}_{\mathrm{c}})
		-f_{\mathrm{n}}^{NN}(\q,\qd,t;\boldsymbol{w}_{\mathrm{n}})\Vert +\lambda \Vert f_{\mathrm{n}}^{NN}(\q,\qd,t;\boldsymbol{w}_{\mathrm{n}})\Vert
		\\
		=&\left\{(\boldsymbol{w}_{\mathrm{c}},\boldsymbol{w}_{\mathrm{n}}):\boldsymbol{w}_{\mathrm{c}}\in \arg\min_{\boldsymbol{w}_{\mathrm{c}}} \Vert f(\q,\qd,t)-f_{\mathrm{c}}^{NN}(\q,\qd;\boldsymbol{w}_{\mathrm{c}})\Vert,
		\Vert f_{\mathrm{n}}^{NN}(\q,\qd,t;\boldsymbol{w}_{\mathrm{n}})-f(\q,\qd,t)+f_{\mathrm{c}}^{NN}(\q,\qd;\boldsymbol{w}_{\mathrm{c}})\Vert=0\right\}. 
	\end{align*}
\end{theorem}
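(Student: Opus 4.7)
The plan is to decouple the joint minimization: for every fixed $\boldsymbol{w}_{\mathrm{c}}$, first solve the inner problem over $\boldsymbol{w}_{\mathrm{n}}$, then minimize the resulting value over $\boldsymbol{w}_{\mathrm{c}}$. The UAN universality hypothesis is exactly what makes this clean: the map $\boldsymbol{w}_{\mathrm{n}}\mapsto f_{\mathrm{n}}^{NN}(\cdot;\boldsymbol{w}_{\mathrm{n}})$ is onto $\mathcal{C}(\q,\qd,t)$ up to $\|\cdot\|$-zero sets (onto $\mathbb{R}^{nN}$ in the finite-sample version used in Theorem~\ref{thm:1}), so for each $\boldsymbol{w}_{\mathrm{c}}$ the inner problem is equivalent to
\begin{equation*}
\min_{g\in\mathcal{C}}\ \|h-g\| + \lambda\|g\|,\qquad h \equiv f-f_{\mathrm{c}}^{NN}(\cdot;\boldsymbol{w}_{\mathrm{c}}).
\end{equation*}

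The body of the proof is then two triangle-inequality bookkeepings, one in each regime. For $\lambda>1$ I would write
\begin{equation*}
\|h-g\|+\lambda\|g\|=\bigl(\|h-g\|+\|g\|\bigr)+(\lambda-1)\|g\|\;\geq\;\|h\|+(\lambda-1)\|g\|,
\end{equation*}
whose right-hand side strictly exceeds $\|h\|$ unless $\|g\|=0$, since $\lambda-1>0$; hence the inner minimum equals $\|h\|$ and is attained iff $\|f_{\mathrm{n}}^{NN}\|=0$. For $0<\lambda<1$ I would apply the triangle inequality in the opposite direction,
\begin{equation*}
\lambda\|h\|=\lambda\|(h-g)+g\|\;\leq\;\lambda\|h-g\|+\lambda\|g\|\;\leq\;\|h-g\|+\lambda\|g\|,
\end{equation*}
where the last step carries slack $(1-\lambda)\|h-g\|$ and is strict unless $\|h-g\|=0$; hence the inner minimum equals $\lambda\|h\|$, attained iff $\|f_{\mathrm{n}}^{NN}-(f-f_{\mathrm{c}}^{NN})\|=0$.

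In both regimes the inner optimum is a positive multiple of $\|f-f_{\mathrm{c}}^{NN}(\cdot;\boldsymbol{w}_{\mathrm{c}})\|$, so the outer optimization collapses to $\arg\min_{\boldsymbol{w}_{\mathrm{c}}}\|f-f_{\mathrm{c}}^{NN}\|$, matching precisely the descriptions of the joint $\arg\min$ stated in the theorem. Theorem~\ref{thm:1} follows by running the same argument with the empirical discrete $L_p$ semi-norm, where the $N$-point interpolation hypothesis on the UAN plays the role of universality.

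The main obstacle I anticipate is conceptual rather than computational: the remark following the theorem warns against treating this as an orthogonal projection of $f$ onto $\mathcal{F}_{\mathrm{c}}$, which fails because $\mathcal{F}_{\mathrm{c}}$ is not a linear subspace. The triangle-inequality scheme above sidesteps that trap entirely, because it never decomposes $h$ with respect to $\mathcal{F}_{\mathrm{c}}$ and only uses that the UAN can cancel $h$ arbitrarily well no matter what shape $h$ has. The one place I want to be careful is the equality analysis, i.e.\ that strict inequality really pins down the whole $\arg\min$ set and not merely the $\min$ value; in each regime the strict term isolates either $\|g\|$ or $\|h-g\|$, so uniqueness up to $\|\cdot\|$-zero sets is immediate.
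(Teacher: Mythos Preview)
Your proposal is correct and follows essentially the same route as the paper: decouple into an inner minimization over $\boldsymbol{w}_{\mathrm{n}}$ (using UAN universality to range freely over $g$), then in each regime apply the triangle inequality with the residual strict term $(\lambda-1)\|g\|$ or $(1-\lambda)\|h-g\|$ to pin down both the inner value and its $\arg\min$, after which the outer problem reduces to $\arg\min_{\boldsymbol{w}_{\mathrm{c}}}\|f-f_{\mathrm{c}}^{NN}\|$. The paper's decomposition in case~(2), $\|h-g\|+\lambda\|g\|\ge \lambda\|h\|+(1-\lambda)\|h-g\|$, is algebraically identical to your chain $\lambda\|h\|\le \lambda\|h-g\|+\lambda\|g\|\le \|h-g\|+\lambda\|g\|$ with the slack made explicit.
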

\begin{proof}
	We prove the two cases above separately.
	\\
	(1) If $\lambda>1$, for any $\boldsymbol{w}_{\mathrm{c}}$ and $\boldsymbol{w}_{\mathrm{n}}$,
	\begin{align}
		\nonumber
		&\Vert f(\q,\qd, t)-f_{\mathrm{c}}^{NN}(\q,\qd;\boldsymbol{w}_{\mathrm{c}})-f_{\mathrm{n}}^{NN}(\q,\qd,t;\boldsymbol{w}_{\mathrm{n}})\Vert +\lambda \Vert f_{\mathrm{n}}^{NN}(\q,\qd,t;\boldsymbol{w}_{\mathrm{n}})\Vert
		\\
		\nonumber
		=&\Vert f(\q,\qd, t)-f_{\mathrm{c}}^{NN}(\q,\qd;\boldsymbol{w}_{\mathrm{c}})-f_{\mathrm{n}}^{NN}(\q,\qd,t;\boldsymbol{w}_{\mathrm{n}})\Vert+\Vert f_{\mathrm{n}}^{NN}(\q,\qd,t;\boldsymbol{w}_{\mathrm{n}})\Vert +(\lambda-1) \Vert f_{\mathrm{n}}^{NN}(\q,\qd,t;\boldsymbol{w}_{\mathrm{n}})\Vert
		\\
		\label{eq:triangle inequality}
		\overset{(*)}{\ge}& \Vert f(\q,\qd,t)-f_{\mathrm{c}}^{NN}(\q,\qd;\boldsymbol{w}_{\mathrm{c}})\Vert
		+(\lambda-1) \Vert f_{\mathrm{n}}^{NN}(\q,\qd,t;\boldsymbol{w}_{\mathrm{n}})\Vert.
	\end{align}
	Since $\lambda-1>0$, 
	\begin{align*}
		&\arg\min_{\boldsymbol{w}_{\mathrm{c}},\boldsymbol{w}_{\mathrm{n}}}\left(\Vert f( \q, \qd, t)-f_{\mathrm{c}}^{NN}(\q,\qd;\boldsymbol{w}_{\mathrm{c}})\Vert
		+(\lambda-1) \Vert f_{\mathrm{n}}^{NN}(\q,\qd,t;\boldsymbol{w}_{\mathrm{n}})\Vert\right)
		\\
		=&\left(\arg\min_{\boldsymbol{w}_{\mathrm{c}}} \Vert f(\q,\qd, t)-f_{\mathrm{c}}^{NN}(\q,\qd;\boldsymbol{w}_{\mathrm{c}})\Vert, \arg_{\boldsymbol{w}_{\mathrm{n}}}\Vert f_{\mathrm{n}}^{NN}(\q,\qd,t;\boldsymbol{w}_{\mathrm{n}})\Vert=0\right).
	\end{align*}
	\\
	For any $(\boldsymbol{w}^0_{\mathrm{c}},\boldsymbol{w}^0_{\mathrm{n}})$ where $\boldsymbol{w}^0_{\mathrm{c}}\in\arg\min_{\boldsymbol{w}_{\mathrm{c}}} \Vert f(\q,\qd, t)-f_{\mathrm{c}}^{NN}(\q,\qd;\boldsymbol{w}_{\mathrm{c}})\Vert$ and $\Vert f_{\mathrm{n}}^{NN}(\q,\qd;\w_{\mathrm{n}}^0)\Vert=0$, the equality of inequality $(*)$ of Eq. (\ref{eq:triangle inequality}) is obtained. Therefore,  
	\begin{equation*}
		(\boldsymbol{w}^0_{\mathrm{c}},\boldsymbol{w}^0_{\mathrm{n}})\in \arg\min_{\boldsymbol{w}_{\mathrm{c}},\boldsymbol{w}_{\mathrm{n}}} \Vert f(\q,\qd, t)-f_{\mathrm{c}}^{NN}(\q,\qd;\boldsymbol{w}_{\mathrm{c}})-f_{\mathrm{n}}^{NN}(\q,\qd,t;\boldsymbol{w}_{\mathrm{n}})\Vert +\lambda \Vert f_{\mathrm{n}}^{NN}(\q,\qd,t;\boldsymbol{w}_{\mathrm{n}})\Vert,
	\end{equation*}
	which further leads to
	\begin{align*}
		&\arg\min_{\boldsymbol{w}_{\mathrm{c}},\boldsymbol{w}_{\mathrm{n}}}\left( \Vert f(\q,\qd, t)-f_{\mathrm{c}}^{NN}(\q,\qd;\boldsymbol{w}_{\mathrm{c}})\Vert +(\lambda-1) \Vert f_{\mathrm{n}}^{NN}(\q,\qd,t;\boldsymbol{w}_{\mathrm{n}})\Vert\right)
		\\
		\subset &\arg\min_{\boldsymbol{w}_{\mathrm{c}},\boldsymbol{w}_{\mathrm{n}}}\left( \Vert f(\q,\qd,t)-f_{\mathrm{c}}^{NN}(\q,\qd;\boldsymbol{w}_{\mathrm{c}})-f_{\mathrm{n}}^{NN}(\q,\qd,t;\boldsymbol{w}_{\mathrm{n}})\Vert +\lambda \Vert f_{\mathrm{n}}^{NN}(\q,\qd,t;\boldsymbol{w}_{\mathrm{n}})\Vert\right),
	\end{align*}
	and
	\begin{align}
		\nonumber
		&\min_{\boldsymbol{w}_{\mathrm{c}},\boldsymbol{w}_{\mathrm{n}}}\left( \Vert f(\q,\qd,t)-f_{\mathrm{c}}^{NN}(\q,\qd;\boldsymbol{w}_{\mathrm{c}})-f_{\mathrm{n}}^{NN}(\q,\qd,t;\boldsymbol{w}_{\mathrm{n}})\Vert +\lambda \Vert f_{\mathrm{n}}^{NN}(\q,\qd,t;\boldsymbol{w}_{\mathrm{n}})\Vert\right)
		\\
		\label{eq:equivalence_minimum_lambda_ge_1}
		= &\min_{\boldsymbol{w}_{\mathrm{c}},\boldsymbol{w}_{\mathrm{n}}}\left( \Vert f(\q,\qd,t)-f_{\mathrm{c}}^{NN}(\q,\qd;\boldsymbol{w}_{\mathrm{c}})\Vert +(\lambda-1) \Vert f_{\mathrm{n}}^{NN}(\q,\qd,t;\boldsymbol{w}_{\mathrm{n}})\Vert\right).
	\end{align}
	Combining Eq. (\ref{eq:triangle inequality}), Eq. (\ref{eq:equivalence_minimum_lambda_ge_1}) further leads to 
	\begin{align*}
		&\arg\min_{\boldsymbol{w}_{\mathrm{c}},\boldsymbol{w}_{\mathrm{n}}}\left( \Vert f(\q,\qd,t)-f_{\mathrm{c}}^{NN}(\q,\qd;\boldsymbol{w}_{\mathrm{c}})-f_{\mathrm{n}}^{NN}(\q,\qd,t;\boldsymbol{w}_{\mathrm{n}})\Vert +\lambda \Vert f_{\mathrm{n}}^{NN}(\q,\qd,t;\boldsymbol{w}_{\mathrm{n}})\Vert\right)
		\\
		\subset &\arg\min_{\boldsymbol{w}_{\mathrm{c}},\boldsymbol{w}_{\mathrm{n}}}\left( \Vert f(\q,\qd,t)-f_{\mathrm{c}}^{NN}(\q,\qd;\boldsymbol{w}_{\mathrm{c}})\Vert +(\lambda-1) \Vert f_{\mathrm{n}}^{NN}(\q,\qd,t;\boldsymbol{w}_{\mathrm{n}})\Vert\right).
	\end{align*}
	The proof for $\lambda>1$ is completed.
	\\
	(2) If $\lambda<1$, for any $\boldsymbol{w}_{\mathrm{c}}$ and $\boldsymbol{w}_{\mathrm{n}}$, we decompose $\Vert f(\q,\qd,t)-f_{\mathrm{c}}^{NN}(\q,\qd;\boldsymbol{w}_{\mathrm{c}})-f_{\mathrm{n}}^{NN}(\q,\qd,t;\boldsymbol{w}_{\mathrm{n}})\Vert +\lambda \Vert f_{\mathrm{n}}^{NN}(\q,\qd,t;\boldsymbol{w}_{\mathrm{n}})\Vert$ as follows:
	\begin{align}
		\nonumber
		&\Vert f(\q,\qd,t)-f_{\mathrm{c}}^{NN}(\q,\qd;\boldsymbol{w}_{\mathrm{c}})-f_{\mathrm{n}}^{NN}(\q,\qd,t;\boldsymbol{w}_{\mathrm{n}})\Vert +\lambda \Vert f_{\mathrm{n}}^{NN}(\q,\qd,t;\boldsymbol{w}_{\mathrm{n}})\Vert
		\\
		\nonumber
		=&(\lambda+(1-\lambda))\Vert f(\q,\qd,t)-f_{\mathrm{c}}^{NN}(\q,\qd;\boldsymbol{w}_{\mathrm{c}})-f_{\mathrm{n}}^{NN}(\q,\qd,t;\boldsymbol{w}_{\mathrm{n}})\Vert+\lambda\Vert f_{\mathrm{n}}^{NN}(\q,\qd,t;\boldsymbol{w}_{\mathrm{n}})\Vert 
		\\
		\label{eq:triangle_inequality_lambda_le_1}
		\overset{(**)}{\ge}& \lambda\Vert f(\q,\qd,t)-f_{\mathrm{c}}^{NN}(\q,\qd;\boldsymbol{w}_{\mathrm{c}})\Vert
		+(1-\lambda) \Vert f(\q,\qd,t)-f_{\mathrm{c}}^{NN}(\q,\qd;\boldsymbol{w}_{\mathrm{c}})-f_{\mathrm{n}}^{NN}(\q,\qd,t;\boldsymbol{w}_{\mathrm{n}})\Vert,
	\end{align}
	where eq. $(**)$ is due to triangle inequality.
	\\
	On the other hand, for any fixed $\boldsymbol{w}_{\mathrm{c}}$, minimum of eq. (\ref{eq:triangle_inequality_lambda_le_1}) is obtained if and only if $\Vert f_{\mathrm{n}}^{NN}(\q,\qd,t;\boldsymbol{w}_{\mathrm{n}})-f(\q,\qd,t)+f_{\mathrm{c}}^{NN}(\q,\qd;\boldsymbol{w}_{\mathrm{c}})\Vert=0$, in which case equality of eq. $(**)$ is also obtained. Therefore, for a given $\boldsymbol{w}_{\mathrm{c}}$,
	\begin{align*}
		&\min_{\boldsymbol{w}_{\mathrm{n}}}\Vert f(\q,\qd,t)-f_{\mathrm{c}}^{NN}(\q,\qd;\boldsymbol{w}_{\mathrm{c}})-f_{\mathrm{n}}^{NN}(\q,\qd,t;\boldsymbol{w}_{\mathrm{n}})\Vert +\lambda \Vert f_{\mathrm{n}}^{NN}(\q,\qd,t;\boldsymbol{w}_{\mathrm{n}})\Vert
		\\
		=&\lambda\Vert f(\q,\qd,t)-f_{\mathrm{c}}^{NN}(\q,\qd;\boldsymbol{w}_{\mathrm{c}})\Vert,
	\end{align*}
	and 
	\begin{align}
		\nonumber
		&\arg\min_{\boldsymbol{w}_{\mathrm{n}}}\Vert f(\q,\qd,t)-f_{\mathrm{c}}^{NN}(\q,\qd;\boldsymbol{w}_{\mathrm{c}})-f_{\mathrm{n}}^{NN}(\q,\qd,t;\boldsymbol{w}_{\mathrm{n}})\Vert +\lambda \Vert f_{\mathrm{n}}^{NN}(\q,\qd,t;\boldsymbol{w}_{\mathrm{n}})\Vert
		\\
		\label{eq:lambda_le_1_def_w_n}
		=&\{\boldsymbol{w}_{\mathrm{n}}:\Vert f_{\mathrm{n}}^{NN}(\q,\qd,t;\boldsymbol{w}_{\mathrm{n}})-f(\q,\qd,t)+f_{\mathrm{c}}^{NN}(\q,\qd;\boldsymbol{w}_{\mathrm{c}})\Vert=0\}.
	\end{align}
	\\
	Since 
	\begin{align*}
		&\arg\min_{\boldsymbol{w}_{\mathrm{c}},\boldsymbol{w}_{\mathrm{n}}}\Vert f(\q,\qd,t)-f_{\mathrm{c}}^{NN}(\q,\qd;\boldsymbol{w}_{\mathrm{c}})-f_{\mathrm{n}}^{NN}(\q,\qd,t;\boldsymbol{w}_{\mathrm{n}})\Vert +\lambda \Vert f_{\mathrm{n}}^{NN}(\q,\qd,t;\boldsymbol{w}_{\mathrm{n}})\Vert
		\\
		=&\arg\min_{\boldsymbol{w}_{\mathrm{c}}}\min_{\boldsymbol{w}_{\mathrm{n}}}\Vert f(\q,\qd,t)-f_{\mathrm{c}}^{NN}(\q,\qd;\boldsymbol{w}_{\mathrm{c}})-f_{\mathrm{n}}^{NN}(\q,\qd,t;\boldsymbol{w}_{\mathrm{n}})\Vert +\lambda \Vert f_{\mathrm{n}}^{NN}(\q,\qd,t;\boldsymbol{w}_{\mathrm{n}})\Vert,
	\end{align*}
	by applying eq. (\ref{eq:lambda_le_1_def_w_n}), we finally have
	\begin{align*}
		&\arg\min_{\boldsymbol{w}_{\mathrm{c}},\boldsymbol{w}_{\mathrm{n}}}\Vert f(\q,\qd,t)-f_{\mathrm{c}}^{NN}(\q,\qd;\boldsymbol{w}_{\mathrm{c}})-f_{\mathrm{n}}^{NN}(\q,\qd,t;\boldsymbol{w}_{\mathrm{n}})\Vert +\lambda \Vert f_{\mathrm{n}}^{NN}(\q,\qd,t;\boldsymbol{w}_{\mathrm{n}})\Vert
		\\
		=&\left\{(\boldsymbol{w}_{\mathrm{c}},\boldsymbol{w}_{\mathrm{n}}):\boldsymbol{w}_{\mathrm{c}}\in \arg\min_{\boldsymbol{w}_{\mathrm{c}}} \lambda\Vert f(\q,\qd,t)-f_{\mathrm{c}}^{NN}(\q,\qd;\boldsymbol{w}_{\mathrm{c}})\Vert,
		\Vert f_{\mathrm{n}}^{NN}(\q,\qd,t;\boldsymbol{w}_{\mathrm{n}})-f(\q,\qd,t)+f_{\mathrm{c}}^{NN}(\q,\qd;\boldsymbol{w}_{\mathrm{c}})\Vert=0\right\}
		\\
		=&\left\{(\boldsymbol{w}_{\mathrm{c}},\boldsymbol{w}_{\mathrm{n}}):\boldsymbol{w}_{\mathrm{c}}\in \arg\min_{\boldsymbol{w}_{\mathrm{c}}} \Vert f(\q,\qd,t)-f_{\mathrm{c}}^{NN}(\q,\qd;\boldsymbol{w}_{\mathrm{c}})\Vert,
		\Vert f_{\mathrm{n}}^{NN}(\q,\qd,t;\boldsymbol{w}_{\mathrm{n}})-f(\q,\qd,t)+f_{\mathrm{c}}^{NN}(\q,\qd;\boldsymbol{w}_{\mathrm{c}})\Vert=0\right\}.
	\end{align*}
	The proof is completed.
\end{proof}

The above theorem describes the case that $f_{\mathrm{n}}^{NN}(\q,\qd,t;\boldsymbol{w}_{\mathrm{n}})$ can represent every continuous function. However, in practice, the black box neural network can only access functions close to the original solution. In this general case, we instead have
\begin{corollary}\label{cor:theorem1_extension}
	Let $f_{\mathrm{c}}^{NN}(\q,\qd;\boldsymbol{w}_{\mathrm{c}})$ be the Lagrangian Neural Network with parameters $\boldsymbol{w}_{\mathrm{c}}$ in NNPhD
	framework, and $f_{\mathrm{n}}^{NN}(\q,\qd,t;\boldsymbol{w}_{\mathrm{n}})$ be the black box neural network with parameters $\boldsymbol{w}_{\mathrm{n}}$ in NNPhD framework. Assume the black box neural network can approximate every continuous function of $t,\q,\qd$ by error $\varepsilon>0$ under some norm $\Vert \cdot\Vert$ on function space $\mathcal{C}(\q,\qd,t)$, i.e., $\forall f\in \mathcal{C}(\q,\qd,t)$, there exists a $\boldsymbol{w}_{\mathrm{n}}^f$, such that, $\Vert f_{\mathrm{n}}^{NN}(\q,\qd,t,\boldsymbol{w}_{\mathrm{n}}^f)-f(\q,\qd,t) \Vert\le \varepsilon$. Furthermore, assume there exists $\boldsymbol{w}_0$, such that,  $f_{\mathrm{n}}^{NN}(\q,\qd,t;\boldsymbol{w}_0)\equiv 0$.
	Then, given any continuous function $f$, the following claim stands:
	\begin{itemize}
		\item For $\lambda>1$, 
		\begin{align*}
		&\arg\min_{\boldsymbol{w}_{\mathrm{c}},\boldsymbol{w}_{\mathrm{n}}} \Vert f(\q,\qd,t)-f_{\mathrm{c}}^{NN}(\q,\qd;\boldsymbol{w}_{\mathrm{c}})-f_{\mathrm{n}}^{NN}(\q,\qd,t;\boldsymbol{w}_{\mathrm{n}})\Vert +\lambda \Vert f_{\mathrm{n}}^{NN}(\q,\qd,t;\boldsymbol{w}_{\mathrm{n}})\Vert
			\\
		=&\left(\arg\min_{\boldsymbol{w}_{\mathrm{c}}} \Vert f(\q,\qd,t)-f_{\mathrm{c}}^{NN}(\q,\qd;\boldsymbol{w}_{\mathrm{c}})\Vert,0\right). 
		\end{align*}
		\item For $\lambda<1$, for any ($\boldsymbol{w}_{\mathrm{c}}^0$, $\boldsymbol{w}_{\mathrm{n}}^0$) $\in \arg\min_{\boldsymbol{w}_{\mathrm{c}},\boldsymbol{w}_{\mathrm{n}}} \Vert f(\q,\qd,t)-f_{\mathrm{c}}^{NN}(\q,\qd;\boldsymbol{w}_{\mathrm{c}})-f_{\mathrm{n}}^{NN}(\q,\qd,t;\boldsymbol{w}_{\mathrm{n}})\Vert +\lambda \Vert f_{\mathrm{n}}^{NN}(\q,\qd,t;\boldsymbol{w}_{\mathrm{n}})\Vert$, 
		\begin{gather}
			\label{eq:n_NN_approximating}
			\Vert f_{\mathrm{n}}^{NN}(\q,\qd,t;\boldsymbol{w}^0_{\mathrm{n}})-f(\q,\qd,t)+f_{\mathrm{c}}^{NN}(\q,\qd;\boldsymbol{w}^0_{\mathrm{c}})\Vert\le \frac{1+\lambda}{1-\lambda} \varepsilon,
			\\
			\label{eq:c_NN_approximating}
			\Vert f(\q,\qd,t)-f_{\mathrm{c}}^{NN}(\q,\qd;\boldsymbol{w}^0_{\mathrm{c}})\Vert \le \min_{\boldsymbol{w}_{\mathrm{c}}} \Vert f(\q,\qd,t)-f_{\mathrm{c}}^{NN}(\q,\qd;\boldsymbol{w}_{\mathrm{c}})\Vert+(1+\lambda)\varepsilon . 
		\end{gather}
	\end{itemize}
\end{corollary}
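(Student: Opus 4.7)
My plan is to treat $\lambda>1$ and $\lambda<1$ separately, since the two regimes give conclusions of qualitatively different strength. Both cases rest on the triangle-inequality manipulations already used in the proof of Theorem \ref{thm:2}, but here the UAN can only $\varepsilon$-approximate a target rather than represent it exactly, so the $\varepsilon$ slack has to be carried through. The extra hypothesis that some $\boldsymbol{w}_0$ realises $f_{\mathrm{n}}^{NN}\equiv 0$ is precisely what keeps the ``zero option'' attainable at no cost in the regulariser, which is why the $\lambda>1$ conclusion can stay an exact equality rather than degrade into an $\varepsilon$-bound.

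For $\lambda>1$ I would recycle the lower bound from part (1) of Theorem \ref{thm:2},
\begin{equation*}
\|f-f_{\mathrm{c}}^{NN}(\cdot;\boldsymbol{w}_{\mathrm{c}})-f_{\mathrm{n}}^{NN}(\cdot;\boldsymbol{w}_{\mathrm{n}})\|+\lambda\|f_{\mathrm{n}}^{NN}(\cdot;\boldsymbol{w}_{\mathrm{n}})\| \geq \|f-f_{\mathrm{c}}^{NN}(\cdot;\boldsymbol{w}_{\mathrm{c}})\|+(\lambda-1)\|f_{\mathrm{n}}^{NN}(\cdot;\boldsymbol{w}_{\mathrm{n}})\|.
\end{equation*}
Since $\lambda-1>0$, the right-hand side is minimised by independently choosing $\boldsymbol{w}_{\mathrm{c}}$ to minimise $\|f-f_{\mathrm{c}}^{NN}\|$ and choosing $\boldsymbol{w}_{\mathrm{n}}=\boldsymbol{w}_0$ so that $\|f_{\mathrm{n}}^{NN}\|=0$. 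At any such pair one of the arguments of the triangle inequality vanishes, so the bound is tight, and these pairs therefore exhaust the argmin set.

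For $\lambda<1$ I would fix any joint minimiser $(\boldsymbol{w}_{\mathrm{c}}^0,\boldsymbol{w}_{\mathrm{n}}^0)$ and sandwich the minimum value from above and below. For the upper bound, pick any $\boldsymbol{w}_{\mathrm{c}}^{\star}\in\arg\min_{\boldsymbol{w}_{\mathrm{c}}}\|f-f_{\mathrm{c}}^{NN}(\cdot;\boldsymbol{w}_{\mathrm{c}})\|$ and use the $\varepsilon$-approximation hypothesis to choose $\boldsymbol{w}_{\mathrm{n}}^{\star}$ with $\|f_{\mathrm{n}}^{NN}(\cdot;\boldsymbol{w}_{\mathrm{n}}^{\star})-(f-f_{\mathrm{c}}^{NN}(\cdot;\boldsymbol{w}_{\mathrm{c}}^{\star}))\|\leq\varepsilon$; two triangle applications then yield
\begin{equation*}
L(\boldsymbol{w}_{\mathrm{c}}^{\star},\boldsymbol{w}_{\mathrm{n}}^{\star}) \leq (1+\lambda)\varepsilon + \lambda \min_{\boldsymbol{w}_{\mathrm{c}}}\|f-f_{\mathrm{c}}^{NN}(\cdot;\boldsymbol{w}_{\mathrm{c}})\|,
\end{equation*}
and by optimality of $(\boldsymbol{w}_{\mathrm{c}}^0,\boldsymbol{w}_{\mathrm{n}}^0)$ the same bound holds for $L(\boldsymbol{w}_{\mathrm{c}}^0,\boldsymbol{w}_{\mathrm{n}}^0)$. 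For the lower bound I would apply exactly the decomposition from Eq.~(\ref{eq:triangle_inequality_lambda_le_1}),
\begin{equation*}
L(\boldsymbol{w}_{\mathrm{c}}^0,\boldsymbol{w}_{\mathrm{n}}^0) \geq (1-\lambda)\|f-f_{\mathrm{c}}^{NN}(\cdot;\boldsymbol{w}_{\mathrm{c}}^0)-f_{\mathrm{n}}^{NN}(\cdot;\boldsymbol{w}_{\mathrm{n}}^0)\| + \lambda\|f-f_{\mathrm{c}}^{NN}(\cdot;\boldsymbol{w}_{\mathrm{c}}^0)\|.
\end{equation*}
Dropping the $\lambda$-term in this lower bound and dividing by $1-\lambda$ would yield (\ref{eq:n_NN_approximating}); dropping the $(1-\lambda)$-term and dividing by $\lambda$ would yield (\ref{eq:c_NN_approximating}).

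The step I expect to require the most care is matching constants in the $\lambda<1$ sandwich: the lower bound produces factors of $1-\lambda$ and $\lambda$ in front of the two residuals, while the upper bound contributes the slack $(1+\lambda)\varepsilon$, and the quoted constants $\frac{1+\lambda}{1-\lambda}$ and $1+\lambda$ in the statement should emerge from these two divisions. No genuinely new idea is needed beyond Theorem \ref{thm:2}; the only substantive addition is the single use of the $\varepsilon$-approximation hypothesis at the competitor $(\boldsymbol{w}_{\mathrm{c}}^{\star},\boldsymbol{w}_{\mathrm{n}}^{\star})$, together with the exactly-zero hypothesis on $\boldsymbol{w}_0$ which is what keeps the $\lambda>1$ case sharp.
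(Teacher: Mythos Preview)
Your $\lambda>1$ argument coincides with the paper's. The gap is in your derivation of (\ref{eq:n_NN_approximating}). If you drop the $\lambda\|f-f_{\mathrm{c}}^{NN}(\cdot;\boldsymbol{w}_{\mathrm{c}}^0)\|$ term from the lower bound and then combine with your upper bound $L(\boldsymbol{w}_{\mathrm{c}}^0,\boldsymbol{w}_{\mathrm{n}}^0)\le(1+\lambda)\varepsilon+\lambda\min_{\boldsymbol{w}_{\mathrm{c}}}\|f-f_{\mathrm{c}}^{NN}\|$, what you actually get is
\[
(1-\lambda)\bigl\|f-f_{\mathrm{c}}^{NN}(\cdot;\boldsymbol{w}_{\mathrm{c}}^0)-f_{\mathrm{n}}^{NN}(\cdot;\boldsymbol{w}_{\mathrm{n}}^0)\bigr\|
\;\le\;(1+\lambda)\varepsilon+\lambda\min_{\boldsymbol{w}_{\mathrm{c}}}\|f-f_{\mathrm{c}}^{NN}\|,
\]
which carries an extra $\lambda\min_{\boldsymbol{w}_{\mathrm{c}}}\|f-f_{\mathrm{c}}^{NN}\|$ and does \emph{not} reduce to $\frac{1+\lambda}{1-\lambda}\varepsilon$. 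The paper handles this by building the upper-bound competitor at $\boldsymbol{w}_{\mathrm{c}}^0$ rather than at $\boldsymbol{w}_{\mathrm{c}}^\star$: for $g=f-f_{\mathrm{c}}^{NN}(\cdot;\boldsymbol{w}_{\mathrm{c}}^0)$ it picks $\boldsymbol{w}_{\mathrm{n}}^g$ with $\|f_{\mathrm{n}}^{NN}(\cdot;\boldsymbol{w}_{\mathrm{n}}^g)-g\|\le\varepsilon$, obtaining
\[
L(\boldsymbol{w}_{\mathrm{c}}^0,\boldsymbol{w}_{\mathrm{n}}^0)\le L(\boldsymbol{w}_{\mathrm{c}}^0,\boldsymbol{w}_{\mathrm{n}}^g)\le(1+\lambda)\varepsilon+\lambda\|f-f_{\mathrm{c}}^{NN}(\cdot;\boldsymbol{w}_{\mathrm{c}}^0)\|.
\]
Now the $\lambda$-terms in the upper and lower bounds are the \emph{same} quantity and cancel, leaving $(1-\lambda)\|\cdot\|\le(1+\lambda)\varepsilon$. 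Your competitor at $\boldsymbol{w}_{\mathrm{c}}^\star$ can also be made to work, but only if you \emph{keep} the $\lambda$-term in the lower bound and use $\min_{\boldsymbol{w}_{\mathrm{c}}}\|f-f_{\mathrm{c}}^{NN}\|\le\|f-f_{\mathrm{c}}^{NN}(\cdot;\boldsymbol{w}_{\mathrm{c}}^0)\|$ so that the difference has the right sign; dropping it is precisely what breaks the argument.

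For (\ref{eq:c_NN_approximating}) your plan (drop the $(1-\lambda)$-term in the lower bound, combine with the upper bound, divide by $\lambda$) is exactly how the paper's argument is completed; both routes produce the constant $\frac{1+\lambda}{\lambda}\varepsilon$ rather than the $(1+\lambda)\varepsilon$ printed in the statement, so any mismatch you see there is in the stated constant, not in your method.
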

\begin{proof}
	When $\lambda>1$, the claim can be proved following exactly the same routine as Theorem \ref{thm:1}. When $\lambda<1$, we follow the same routine as Theorem \ref{thm:1} to decompose the optimization problem into a two step minimization problem: fixed $\boldsymbol{w}_{\mathrm{c}}$, let $g(\q,\qd,t)=f(\q,\qd,t)-f_{\mathrm{c}}^{NN}(\q,\qd;\boldsymbol{w}_{\mathrm{c}})$. Then, there exists a $\boldsymbol{w}_{\mathrm{n}}^g$, such that 
	\begin{equation*}
		\Vert f_{\mathrm{n}}^{NN}(\q,\qd,t;\boldsymbol{w}^g_n)-g(\q,\qd,t)\Vert\le \varepsilon.
	\end{equation*}
	Therefore, 
	\begin{align*}
		&\min_{\boldsymbol{w}_{\mathrm{n}}}\left( \Vert f(\q,\qd,t)-f_{\mathrm{c}}^{NN}(\q,\qd;\boldsymbol{w}_{\mathrm{c}})-f_{\mathrm{n}}^{NN}(\q,\qd,t;\boldsymbol{w}_{\mathrm{n}})\Vert +\lambda \Vert f_{\mathrm{n}}^{NN}(\q,\qd,t;\boldsymbol{w}_{\mathrm{n}})\Vert\right)
		\\
		\le&\Vert f(\q,\qd,t)-f_{\mathrm{c}}^{NN}(\q,\qd;\boldsymbol{w}_{\mathrm{c}})-f_{\mathrm{n}}^{NN}(\q,\qd,t;\boldsymbol{w}^g_n)\Vert+\lambda \Vert f_{\mathrm{n}}^{NN}(\q,\qd,t;\boldsymbol{w}^g_n)\Vert
		\\
		\le&(1+\lambda)\Vert f(\q,\qd,t)-f_{\mathrm{c}}^{NN}(\q,\qd;\boldsymbol{w}_{\mathrm{c}})-f_{\mathrm{n}}^{NN}(\q,\qd,t;\boldsymbol{w}^g_n)\Vert  +\lambda \Vert f(\q,\qd,t)-f_{\mathrm{c}}^{NN}(\q,\qd;\boldsymbol{w}_{\mathrm{c}})  )\Vert
		\\
		\le& \lambda \Vert f(\q,\qd,t)-f_{\mathrm{c}}^{NN}(\q,\qd;\boldsymbol{w}_{\mathrm{c}})  )\Vert+(1+\lambda)\varepsilon.
	\end{align*}
	Let $\boldsymbol{w}_{\mathrm{n}}=\arg\min_{\boldsymbol{w}_{\mathrm{n}}}\left( \Vert f(\q,\qd,t)-f_{\mathrm{c}}^{NN}(\q,\qd;\boldsymbol{w}_{\mathrm{c}})-f_{\mathrm{n}}^{NN}(\q,\qd,t;\boldsymbol{w}_{\mathrm{n}})\Vert +\lambda \Vert f_{\mathrm{n}}^{NN}(\q,\qd,t;\boldsymbol{w}_{\mathrm{n}})\Vert\right)$. Then,
	\begin{align}
		\nonumber
		&(1-\lambda)\Vert f(\q,\qd,t)-f_{\mathrm{c}}^{NN}(\q,\qd;\boldsymbol{w}_{\mathrm{c}})-f_{\mathrm{n}}^{NN}(\q,\qd,t;\boldsymbol{w}_{\mathrm{n}})\Vert
		\\
		\nonumber
		\le&\Vert f(\q,\qd,t)-f_{\mathrm{c}}^{NN}(\q,\qd;\boldsymbol{w}_{\mathrm{c}})-f_{\mathrm{n}}^{NN}(\q,\qd,t;\boldsymbol{w}_{\mathrm{n}})\Vert +\lambda \Vert f_{\mathrm{n}}^{NN}(\q,\qd,t;\boldsymbol{w}_{\mathrm{n}})\Vert
		\\
		\nonumber
		-&\lambda \Vert f(\q,\qd,t)-f_{\mathrm{c}}^{NN}(\q,\qd;\boldsymbol{w}_{\mathrm{c}})  )\Vert
		\\
		\label{eq:bound_for_n_NN}
		\le& (1+\lambda)\varepsilon,
	\end{align}
	which finishes the proof of Eq. (\ref{eq:n_NN_approximating}). 
	\\Let $h(\boldsymbol{w}_{\mathrm{c}})=\min_{\boldsymbol{w}_{\mathrm{n}}}\Vert f(\q,\qd,t)-f_{\mathrm{c}}^{NN}(\q,\qd;\boldsymbol{w}_{\mathrm{c}})-f_{\mathrm{n}}^{NN}(\q,\qd,t;\boldsymbol{w}_{\mathrm{n}})\Vert+\lambda \Vert f_{\mathrm{n}}^{NN}(\q,\qd,t;\boldsymbol{w}_{\mathrm{n}})\Vert$. By Eq. (\ref{eq:bound_for_n_NN}), 
	\begin{equation*}
		\Vert h(\boldsymbol{w}_{\mathrm{c}})\Vert -\lambda\Vert f(\q,\qd,t)-f_{\mathrm{c}}^{NN}(\q,\qd;\boldsymbol{w}_{\mathrm{c}})\Vert\le (1+\lambda)\varepsilon,
	\end{equation*}
	which further leads to 
	\begin{equation*}
		\min_{\boldsymbol{w}_{\mathrm{c}}}\Vert h(\boldsymbol{w}_{\mathrm{c}})\Vert \le (1+\lambda)\varepsilon+\lambda\min_{\boldsymbol{w}_{\mathrm{c}}}\Vert f(\q,\qd,t)-f_{\mathrm{c}}^{NN}(\q,\qd;\boldsymbol{w}_{\mathrm{c}})\Vert.
	\end{equation*}
	\\This completes the proof.
\end{proof}

\setcounter{secnumdepth}{2}

\end{document}